\newtheorem{definition}{Definition}
\newtheorem{property}{Property}
\newtheorem{corollary}[property]{Corollary}
\newcommand{\dd}[1]{\ensuremath {{\rm d}{#1}}}
\newcommand{\defeq}{:=}
\renewcommand{\th}[1]{\ensuremath {#1}^{\textrm{th}}}
\def\bfcd{\dot{\bfc}}
\def\bfrd{\dot{\bfr}}
\def\bfxd{\dot{\bfx}}
\def\bfxid{\dot{\bfxi}}
\def\bfzd{\dot{\bfz}}
\def\bfzetad{\dot{\bfzeta}}
\def\gammad{\dot{\gamma}}
\def\omegad{\dot{\omega}}
\def\sd{\dot{s}}
\def\bfcdd{\ddot{\bfc}}
\def\sdd{\ddot{s}}
\newcommand{\bfzeta}{\boldsymbol{\zeta}}
\newcommand{\bflambda}{\boldsymbol{\lambda}}
\newcommand{\bfxi}{\boldsymbol{\xi}}
\newcommand{\bftau}{\boldsymbol{\tau}}
\newcommand{\bfphi}{\boldsymbol{\phi}}
\newcommand{\bfvarphi}{\boldsymbol{\varphi}}
\newcommand{\bfb}{\ensuremath {\bm{b}}}
\newcommand{\bfc}{\ensuremath {\bm{c}}}
\newcommand{\bfd}{\ensuremath {\bm{d}}}
\newcommand{\bfe}{\ensuremath {\bm{e}}}
\newcommand{\bff}{\ensuremath {\bm{f}}}
\newcommand{\bfg}{\ensuremath {\bm{g}}}
\newcommand{\bfh}{\ensuremath {\bm{h}}}
\newcommand{\bfj}{\ensuremath {\bm{j}}}
\newcommand{\bfl}{\ensuremath {\bm{l}}}
\newcommand{\bfn}{\ensuremath {\bm{n}}}
\newcommand{\bfo}{\ensuremath {\bm{o}}}
\newcommand{\bfp}{\ensuremath {\bm{p}}}
\newcommand{\bfq}{\ensuremath {\bm{q}}}
\newcommand{\bfr}{\ensuremath {\bm{r}}}
\newcommand{\bft}{\ensuremath {\bm{t}}}
\newcommand{\bfu}{\ensuremath {\bm{u}}}
\newcommand{\bfv}{\ensuremath {\bm{v}}}
\newcommand{\bfw}{\ensuremath {\bm{w}}}
\newcommand{\bfx}{\ensuremath {\bm{x}}}
\newcommand{\bfz}{\ensuremath {\bm{z}}}
\newcommand{\bfA}{\mathbf{A}}
\newcommand{\bfB}{\mathbf{B}}
\newcommand{\bfC}{\mathbf{C}}
\newcommand{\bfH}{\mathbf{H}}
\newcommand{\bfI}{\mathbf{I}}
\newcommand{\bfJ}{\mathbf{J}}
\newcommand{\bfL}{\mathbf{L}}
\newcommand{\bfM}{\mathbf{M}}
\newcommand{\bfN}{\mathbf{N}}
\newcommand{\bfP}{\mathbf{P}}
\newcommand{\bfQ}{\mathbf{Q}}
\newcommand{\bfR}{\mathbf{R}}
\newcommand{\bfS}{\mathbf{S}}
\newcommand{\bfT}{\mathbf{T}}
\newcommand{\calA}{{\cal A}}
\newcommand{\calC}{{\cal C}}
\newcommand{\calE}{{\cal E}}
\newcommand{\calI}{{\cal I}}
\newcommand{\calL}{{\cal L}}
\newcommand{\calR}{{\cal R}}
\newcommand{\calS}{{\cal S}}
\newcommand{\calW}{{\cal W}}
\newcommand{\calZ}{{\cal Z}}
\newcommand{\bbR}{{\mathbb{R}}}
\def\isubscript{\mathrm{i}}
\def\fsubscript{\mathrm{f}}
\def\traj{\mathcal{I}}
\def\csubscript{\mathrm{c}}
\def\zerovec{\bm{0}}
\newcommand{\zeromat}[2]{\bm{0}_{{#1}, {#2}}}
\DeclareMathOperator*{\minimize}{\mbox{minimize}}
\def\MATbx{\color{blue} \times}
\def\MATbz{\color{blue} 0}
\def\MATcx{\color{cyan} \times}
\def\MATgx{\color{green}\times}
\def\MATgz{\color{green}0}
\def\MATrxb{\color{red}\times}
\def\MATrx{\color{red}  \times}
\def\MATrzb{{\color{red}0}\vspace{-1pt}}
\def\MATrz{\color{red}  0}
\def\MATxxb{\color{gray}\times}
\def\MATxx{\color{gray} \times}
\def\MATxz{\color{gray} 0}
\def\MATzb{0\vspace{-1pt}}
\def\MATz{0\vspace{-2pt}}
\def\alphamax{\alpha_\textnormal{max}}
\def\alphamin{\alpha_\textnormal{min}}
\def\bfcdi{\bfcd_\isubscript}
\def\bfcf{\bfc_\fsubscript}
\def\bfci{\bfc_\isubscript}
\def\bfnabla{\bm{\nabla}}
\def\bfni{\bfn_\isubscript}
\def\bfphi{\bm{\varphi}}
\def\bfrf{\bfr_\fsubscript}
\def\bfri{\bfr_\isubscript}
\def\bfxf{\bfx_\fsubscript}
\def\bfxiinit{\bfxi_\isubscript}
\def\bfxinit{\bfx_\isubscript}
\def\calCf{{\calC_\fsubscript}}
\def\calCi{{\calC_\isubscript}}
\def\lambdaf{\lambda_\fsubscript}
\def\lambdamax{\lambda_\textnormal{max}} 
\def\lambdamin{\lambda_\textnormal{min}} 
\def\omegaimax{\omega_{\isubscript,\textnormal{max}}}
\def\omegaimin{\omega_{\isubscript,\textnormal{min}}}
\def\omegai{\omega_\isubscript}
\def\sc{s_\textnormal{c}}
\def\tc{t_\textnormal{c}}
\def\tswing{t_\textnormal{swing}}
\def\hi{h_\isubscript}
\def\hdi{\dot{h}_\isubscript}
\def\halpha{h_\alpha}
\def\hf{h_\fsubscript}
\newcommand{\BIN}{\begin{bmatrix}}
\newcommand{\sBIN}{\left[\begin{smallmatrix}}
\newcommand{\BOUT}{\end{bmatrix}}
\newcommand{\sBOUT}{\end{smallmatrix}\right]}
\newcommand{\subjto}{\mbox{subject to}}
\begin{document}

\title{Capturability-based Pattern Generation for \\ Walking with Variable Height}

\author{St\'{e}phane~Caron, Adrien~Escande, Leonardo~Lanari, and~Bastien~Mallein
\thanks{%
    St\'{e}phane Caron is with the Laboratoire d'Informatique, de Robotique et de Micro\'{e}lectronique de Montpellier (LIRMM), CNRS--University of Montpellier, Montpellier, France.

    Adrien Escande is with the CNRS-AIST Joint Robotics Laboratory (JRL), UMI3218/RL, Japan. 

    Leonardo Lanari is with the Dipartimento di Ingegneria Informatica, Automatica e Gestionale, Sapienza Universit\`{a} di Roma, Rome, Italy (e-mail: lanari@diag.uniroma1.it).

    Bastien Mallein is with the Laboratoire Analyse, G\'{e}om\'{e}trie et Applications (LAGA), CNRS--Paris 13 University, Villetaneuse, France.

    Corresponding author:~\texttt{stephane.caron@lirmm.fr}.}}

\maketitle

\begin{abstract}
    Capturability analysis of the linear inverted pendulum (LIP) model enabled walking with constrained height based on the \emph{capture point}. We generalize this analysis to the variable-height inverted pendulum (VHIP) and show how it enables 3D walking over uneven terrains based on \emph{capture inputs}. Thanks to a tailored optimization scheme, we can compute these inputs fast enough for real-time model predictive control. We implement this approach as open-source software and demonstrate it in dynamic simulations.
\end{abstract}

\begin{IEEEkeywords}
    Bipedal walking, Capturability, Uneven terrain
\end{IEEEkeywords}

\IEEEpeerreviewmaketitle

\section{Introduction}
\label{sec:introduction}

Capturability quantifies the ability of a system to come to a stop. For a humanoid walking in the linear inverted pendulum~(LIP) mode, it is embodied by the \emph{capture point}, the point on the ground where the robot should step in order to bring itself to a stop~\cite{pratt2006humanoids}. In recent years, one of the main lines of research in LIP-based studies has explored the question of walking by feedforward planning and feedback control of the capture point~\cite{koolen2012ijrr, sugihara2009icra, takenaka2009iros, morisawa2012humanoids, englsberger2015tro, griffin2017iros}.

The LIP owes its tractability to two assumptions: no angular-momentum variation around the center of mass (CoM), and a holonomic constraint on the CoM height. As a consequence of the latter, a majority of LIP-based walking controllers assume a flat terrain. Removing this holonomic constraint from the LIP leads to the \emph{variable-height inverted pendulum}~(VHIP) model, for which our understanding is at an earlier stage. Previous studies~\cite{pratt2007icra, ramos2015humanoids, koolen2016humanoids} focused on its balance control for planar motions (sagittal and vertical only). In a preliminary version of this work~\cite{caron2018icra}, we extended the analysis from 2D to 3D balance control. In the present work, we bridge the gap from balancing to walking.

Our contribution is three-fold. First, we provide a necessary and sufficient condition for the capturability of the VHIP model (Section~\ref{sec:analysis}). Second, we show how to turn this condition into an optimization problem (Section~\ref{sec:zero-step}) for which we develop a tailored optimization scheme (Section~\ref{sec:optim}). Finally, we adapt this optimization into a model-predictive walking pattern generator for rough terrains (Section~\ref{sec:one-step}) that we demonstrate in dynamic simulations (Section~\ref{sec:simus}).

\section{Capturability of inverted pendulum models}
\label{sec:analysis}

The critical part of the dynamics for a walking biped lies in the Newton-Euler equation that drives its unactuated floating-base coordinates:
\begin{equation}
    \label{eq:newton-euler}
    \begin{bmatrix} \bfcdd \\ \dot{\bfL}_{\bfc} \end{bmatrix}
    =
    \begin{bmatrix} \frac{1}{m} \bff \\ \bftau_{\bfc} \end{bmatrix}
    +
    \begin{bmatrix} \bfg \\ \bm{0} \end{bmatrix}
\end{equation}
where $\bfc$ is the position of the center of mass (CoM) of the robot, $\bfg$ is the gravity vector (also written $\bfg = -g\bfe_z$ with $g$ the gravitational constant), $m$ is the total robot mass and $\bfL_{\bfc}$ is the angular momentum around $\bfc$. The net contact wrench $(\bff, \bftau_{\bfc})$ consists of the resultant $\bff$ of external contact forces and their moment $\bftau_{\bfc}$ around the CoM.

\subsection{Inverted pendulum models}

The linear inverted pendulum~(LIP)~\cite{kajita2001iros} model is based on two constraints: a constant angular momentum around the center of mass, and a constant CoM height with respect to a reference plane:
\begin{align}
    \label{eq:lipm-assumption-1} \dot{\bfL}_{\bfc} & = \bm{0} \\
    \label{eq:lipm-assumption-2} \bfn \cdot (\bfc - \bfo) & = h
\end{align}
where $\bfn$, $\bfo$ and $h$ are respectively the normal vector, reference point and reference height that define the CoM motion plane. As a consequence of these two assumptions, the Newton-Euler equation~\eqref{eq:newton-euler} simplifies\footnote{
    See \emph{e.g.}~\cite{caron2016tro} for a reminder of the steps of this derivation.
} to:
\begin{equation}
    \label{eq:lipm}
    \bfcdd = \omega^2 (\bfc - \bfr) + \bfg
\end{equation}
where $\omega \defeq \sqrt{g / h}$ is a constant and $\bfr$ is the {center of pressure}~(CoP), or zero-tilting moment point~(ZMP) when there are multiple contacts. A strong limitation of the LIP is the holonomic constraint~\eqref{eq:lipm-assumption-2} on the CoM, which can be kinematically problematic in scenarios such as stair climbing. One line of research sought to overcome this by constraining the CoM to parametric or piecewise-planar surfaces~\cite{morisawa2005icra, zhao2012humanoids}, but the next question of deciding such surfaces based on terrain topology has not attracted a lot of attention so far.

The variable-height inverted pendulum~(VHIP) model strips away the holonomic CoM constraint altogether~\cite{hopkins2014humanoids, kamioka2015iros}. Its equation of motion is:
\begin{equation}
    \label{eq:vhip}
    \bfcdd = \lambda (\bfc - \bfr) + \bfg
\end{equation}
where $\lambda > 0$ is now a time-varying stiffness\footnote{
    All quantities being normalized by mass, we call $\lambda$ a stiffness although its unit is $\textnormal{s}^{-2}$ and not $\textnormal{kg}.\textnormal{s}^{-2}$. Similarly, we will refer to frequencies (unit:~$\textnormal{s}^{-1}$) as dampings.
} coefficient. The two control inputs of this system are the center of pressure $\bfr$ and the stiffness $\lambda$, as shown in Figure~\ref{fig:pendulums}.

\subsection{Feasibility conditions}

To be \emph{feasible}, the CoP $\bfr$ must belong to the contact area $\calC$ under the supporting foot. This area is also time-varying but changes only discretely. The transition from one support contact to the next is called a \emph{contact switch}. A trajectory with $N$ contact switches is called an $N$-step trajectory. 

We assume that all contact areas are planar and polygonal. Let us denote by $\bfo$ the center of the area $\calC$ and by $\bfn$ its normal (such that $\bfn \cdot \bfe_z \neq 0$). The CoP $\bfr$ belongs to the plane of contact if and only if $(\bfr - \bfo) \cdot \bfn = 0$. The \emph{height} of the CoM $\bfc$ above the contact area is the algebraic distance $h(\bfc)$ such that $\bfc - h(\bfc) \bfe_z$ belongs to the contact plane:
\begin{equation}
    h(\bfc) \defeq \frac{(\bfc - \bfo) \cdot \bfn}{(\bfe_z \cdot \bfn)}
\end{equation}
Note how, when walking on a horizontal floor, $\bfe_z$ and $\bfn$ are aligned and $h$ is simply the $z$ coordinate of the center of mass. 

To be \emph{feasible}, the stiffness $\lambda$ must be non-negative by unilaterality of contact. We furthermore impose that $\lambda \in [\lambdamin, \lambdamax]$ so that contact pressure is never exactly zero and remains bounded. Note that we do not model Coulomb friction conditions here: having found in a previous work that CoP feasibility constraints are usually more stringent than friction constraints when walking over uneven terrains~\cite{caron2017iros}, we assume sufficient friction in the present study.

\begin{figure}[t]
    \centering
    \includegraphics[width=\columnwidth]{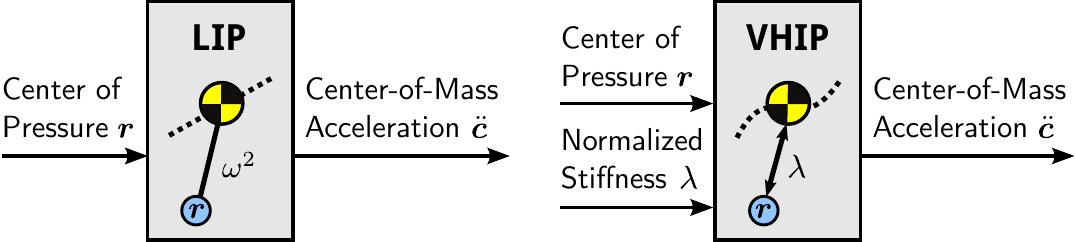}
    \caption{
        Inputs and output of the \emph{linear inverted pendulum}~(LIP) and \emph{variable-height inverted pendulum}~(VHIP) models. In the former, the stiffness coefficient is fixed to a constant $\omega^2$, while it becomes an additional input $\lambda$ of the latter.
    }
    \label{fig:pendulums}
\end{figure}

An input function $t \mapsto (\lambda(t), \bfr(t))$ is \emph{feasible} when both $\lambda(t)$ and $\bfr(t)$ are feasible at all times $t$. A general control problem is to find a feasible input function such that the resulting output trajectory $\bfc(t)$ has certain properties. For the locomotion problem of ``getting somewhere'', we will focus on the property of converging to a desired location.

\subsection{Capture inputs and capture trajectories}

A natural choice of the pendulum state consists of its CoM position and velocity~$\bfx = (\bfc, \bfcd)$. 
\begin{definition}[Static equilibrium]
    A state $\bfx = (\bfc, \bfcd)$ is a static equilibrium when its velocity $\bfcd$ is zero and can remain zero with suitable constant controls $(\lambdaf, \bfrf)$.
\end{definition}
Static equilibria, also called \emph{capture states}~\cite{pratt2006humanoids}, are the targets of capturability analysis, the desired locations that the CoM should converge to. A static equilibrium is characterized by its CoM position $\bfcf$ and the contact $\calCf$ upon which it rests at height $\hf = h(\bfcf)$. The only control input $\lambdaf, \bfrf$ that maintains the pendulum in static equilibrium is such that $\bfcf = \bfrf - \bfg / \lambdaf$, that is:
\begin{align}
    \label{eq:lambdaf}
    \lambdaf(\bfcf) & = \frac{g}{\hf} &
    \bfrf(\bfcf) & = \bfcf - \hf \bfe_z
\end{align}

Given an $N$-step contact sequence, we say that a state $\bfxinit$ is ($N$-step) \emph{capturable} when there exists a feasible input function $\lambda(t), \bfr(t)$ such that applying Equation~\eqref{eq:vhip} from $\bfxinit$ brings the system asymptotically to an equilibrium $\bfxf$. We call such functions \emph{capture inputs} of the capturable state $\bfxinit$, and denote their set by $\traj_{\bfxinit, \bfxf}$. We call \emph{capture trajectory} the CoM trajectory $\bfc(t)$ resulting from a capture input. In what follows, we use the subscript $\square_\isubscript$ to denote the ``initial'' or instantaneous state of the system, and the subscript $\square_\fsubscript$ for its ``final'' or asymptotic state.

By definition, $\bfxinit$ is capturable if and only if there exists $\bfxf$ such that $\traj_{\bfxinit,\bfxf} \neq \emptyset$. The set $\traj_{\bfxinit,\bfxf}$ contains however several solutions, including mathematical oddities such as functions with ever-increasing frequencies. In what follows, we restrict it to inputs that converge asymptotically:
\begin{equation}
  \traj_{\bfxinit,\bfxf}^c = \left\{ (\lambda(t),\bfr(t)) \in \traj_{\bfxinit,\bfxf} : \begin{array}{l} \lim_{t \to \infty} \lambda(t) = \lambdaf\\ \lim_{t \to \infty} \bfr(t) = \bfrf\end{array} \right\}
\end{equation}
Property~\ref{prop:capture-traj} shows that this choice does not cause any loss of generality (see Appendix~\ref{app:math-background}).

\subsection{Dichotomy of the components of motion}

One of the main findings in the control of the LIP model is to focus on its divergent component of motion~(DCM), the capture point. Among other benefits, controlling only the capture point reduces second-order dynamics to first order and maximizes the basin of attraction of feedback controllers~\cite{koolen2012ijrr, sugihara2009icra, takenaka2009iros}. An important step in the analysis of the VHIP model is therefore to identify its DCM. Interestingly, the answer can be found in a study of motorcycle balance~\cite{hauser2004cdc}, which we recall here.

The equation of motion~\eqref{eq:vhip} of the VHIP can be interpreted as either nonlinear or linear time-variant, depending on whether one focuses respectively on feedback or open-loop control. We focus on the latter for walking pattern generation. Let us then rewrite this equation as a first-order linear time-variant system:
\begin{equation}
    \BIN \bfcd \\ \bfcdd \BOUT
    =
    \BIN \zerovec & \bfI \\ \lambda \bfI & \zerovec \BOUT 
    \BIN \bfc \\ \bfcd \BOUT
    +
    \BIN \zerovec \\ \bfg - \lambda \bfr \BOUT
\end{equation}
where $\bfI$ is the $3 \times 3$ identity matrix. This equation has the form $\bfxd = \bfA(t) \bfx + \bfb(t)$ where the system matrix $\bfA$ depends on the stiffness input $\lambda$, while the forcing term $\bfb$ varies with both inputs $\lambda$ and $\bfr$.

Hauser \emph{et al.} showed~\cite{hauser2004cdc} how to obtain an \emph{exponential dichotomy}~\cite{coppel1971sdeds} of the state $\bfx$ (that is, how to decompose it into convergent and divergent components) by applying a change of coordinates $\bfx = \bfS \bfz$ with:
\begin{equation}
    \bfS 
    = \frac{1}{\gamma + \omega} 
    \BIN \bfI & \bfI \\ -\omega \bfI & \gamma \bfI \BOUT
    \Longleftrightarrow
    \bfS^{-1}
    = 
    \BIN \gamma \bfI & -\bfI \\ \omega \bfI & +\bfI \BOUT
    \label{eq:S-Sinv}
\end{equation}
The two functions $\gamma(t)$ and $\omega(t)$ are \emph{positive} and of class $\calC^1$. We will refer to them as \emph{dampings} in accordance with their (normalized) physical unit. The new state vector $\bfz$ then consists of two components $\bfzeta$ and $\bfxi$ defined by:
\begin{align}
    \bfzeta & = \gamma \bfc - \bfcd \\
    \bfxi & = \omega \bfc + \bfcd \label{eq:def-xi}
\end{align}
They respectively correspond to the \emph{convergent} and \emph{divergent component of motion}~(DCM). In the case of a linear inverted pendulum with constant $\omega$, the DCM $\bfxi$ is simply proportional to the capture point $\bfc + \bfcd / \omega$.

Collectively, the state vector $\bfz$ is subject to:
\begin{align}
\bfzd & = \tilde{\bfA} \bfz + \tilde{\bfb} \\
    \tilde{\bfA} & = \bfS^{-1} (\bfA \bfS - \dot{\bfS}) \\
    \tilde{\bfb} & = \bfS^{-1} \bfb
\end{align}
The calculation of $\tilde{\bfb}$ is straightforward. That of $\tilde{\bfA}$ yields:
\begin{align}
    \tilde{\bfA} & = \frac{1}{\gamma + \omega}
    \BIN 
        (\dot{\gamma} - \gamma \omega - \lambda) \bfI &
        (\dot{\gamma} + \gamma^2 - \lambda) \bfI \\
        (\dot{\omega} - \omega^2 + \lambda) \bfI &
        (\dot{\omega} + \omega \gamma + \lambda) \bfI
    \BOUT
\end{align}
To decouple the system, we can eliminate non-diagonal terms in this state matrix by imposing the two following Riccati equations:
\begin{align}
    \dot{\gamma} & = \lambda - \gamma^2 \label{eq:riccati-gamma} \\
    \dot{\omega} & = \omega^2 - \lambda \label{eq:riccati-omega}
\end{align}
This results in the following state dynamics:
\begin{equation}
    \label{eq:decoupledsystem}
    \bfzd 
    =
    \BIN \bfzetad \\ \bfxid \BOUT
    =
    \BIN -\gamma \bfI & \zerovec \\ \zerovec & \omega \bfI \BOUT
    \BIN \bfzeta \\ \bfxi \BOUT
    +
    \BIN \lambda \bfr - \bfg \\ \bfg - \lambda \bfr \BOUT
\end{equation}

The linear time-varying system has thus been decoupled into two linearly independent components $\bfzeta$ and $\bfxi$ that evolve according to their own dynamics, provided that there exists two $\mathcal{C}^1$ positive finite solutions to~\eqref{eq:riccati-gamma} and \eqref{eq:riccati-omega}. A proof of this and a detailed analysis of damping functions are given in Appendix~\ref{app:math-riccati-solutions}.

Although we defer the detailed analysis of damping functions to this Appendix, its takeaway point is that $\gamma$ and $\omega$ are, in themselves, convergent and divergent components. A parallel can be drawn between the DCM--CoP and $\omega$--$\lambda$ systems:
\begin{itemize}
    \item In the LIP, with the CoP restricted to a support area, the CoP is a repulsor of the DCM, and the DCM is controllable if and only if it is above the support area~\cite{sugihara2009icra}.
    \item In the VHIP, with $\lambda$ restricted to $[\lambdamin, \lambdamax]$, $\lambda$ is a repulsor of $\omega$, and $\omega$ is controllable if and only if it belongs to $[\sqrt{\lambdamin}, \sqrt{\lambdamax}]$ (Property~\ref{prop:omega-bounds}).
\end{itemize}
This remark is central to the walking pattern generation method in Section~\ref{sec:zero-step}, which reduces three-dimensional capturability to one dimension.

It can be shown that, regardless of the initial state $\bfxinit$ of the system, \emph{any} input function from $\traj_{\bfxinit,\bfxf}^c$ makes $\bfzeta$ converge as well (Property~\ref{prop:zeta-conv}), owing it its name of {convergent} component of motion. From a control perspective, spending additional inputs to control this component is not necessary and can even be wasteful.\footnote{
    For the LIP, it reduces the basin of attraction of feedback controllers~\cite{sugihara2009icra}.
} The main concern of capturability analysis is therefore to prevent the other component $\bfxi$ from diverging.

\subsection{Boundedness condition}

The divergent component of motion $\bfxi$ corresponding to the damping $\omega$ is subject to the differential equation:
\begin{equation}
    \label{eq:xi-def}
    \bfxid = \omega \bfxi + \bfg - \lambda \bfr
\end{equation}
The general solution to this equation is given by:
\begin{equation}
    \label{eq:xi(t)}
    \bfxi(t) = \left(\bfxi(0) + \int_0^t e^{-\Omega(\tau)} (\bfg - \lambda (\tau) \bfr(\tau)) \dd{\tau}\right) e^{\Omega(t)}
\end{equation}
where $\Omega(t) = \int_0^t \omega(t) \dd{t}$. In our working assumptions, this integral is well-defined and finite (details in Appendix~\ref{app:dcm}). 

Set aside the particular condition that we are about to discuss, the function $\bfxi(t)$ diverges as $t \to \infty$, giving $\bfxi$ its name of {divergent} component of motion (DCM)~\cite{takenaka2009iros}.\footnote{
    More specifically, our analysis considers a \emph{time-varying} divergent component of motion~\cite{hopkins2014humanoids}. While previous works such as~\cite{takenaka2009iros, englsberger2015tro, hopkins2014humanoids} chose to write their DCMs as positions $\bfc + \bfcd / \omega$, we cast them as velocities here to simplify calculations (consider the derivative of a product $uv$ compared to that of a ratio $u/v$). The formula of the DCM itself is not a crucial design choice, as we will discuss at the end of this Section.
} However, a careful match between future capture inputs and the initial condition $\bfxiinit$ can guarantee that $\bfxi(t)$ converges as well. This choice is known as the \emph{boundedness condition}~\cite{lanari2014humanoids}:

\begin{property}[Boundedness condition]
    \label{prop:xi-conv}
    Consider an input function $\lambda(t), \bfr(t)$ such that $\lim_{t \to \infty} \lambda(t) = \lambdaf$ and $\lim_{t\to \infty} \bfr(t) = \bfrf$. Then, there exists a unique $\bfxiinit = \omegai \bfci + \bfcdi$ such that the solution $\bfxi$ of \eqref{eq:xi-def} with $\bfxi(0) = \bfxiinit$ remains finite at all times. This initial condition is given by:
    \begin{equation}
        \label{eq:xi-initial}
        \boxed{\bfxiinit = \int_0^\infty e^{-\Omega(t)} (\lambda(t) \bfr(t) - \bfg) \dd{t}}
    \end{equation}
    where $\omegai$ is the initial value of the unique bounded solution $\omega$ to the Riccati equation $\omegad = \omega^2 - \lambda$ and $\Omega$ is the antiderivative of $\omega$ such that $\Omega(0)=0$.
\end{property}

The proof of this property is given in Appendix~\ref{app:dcm}. In the familiar setting of the LIP where $\lambda = \omega_\csubscript^2$ is a constant, taking a constant CoP $\bfr_\csubscript$ yields:
\begin{align}
    \omega_\csubscript \bfci + \bfcdi =
    \int_0^\infty (\omega_\csubscript^2 \bfr_\csubscript - \bfg) e^{-\omega_\csubscript t} \dd{t} = 
    \omega_\csubscript \bfr_\csubscript - \frac{\bfg}{\omega_\csubscript} 
\end{align}
Over horizontal coordinates, this equation implies that $\bfr_\csubscript^{xy} = \bfci^{xy} + {\bfcdi^{xy}}/{\omega_\csubscript}$, \emph{i.e.} the CoP is located at the capture point. Over the $z$ coordinate, it yields $\omega_\csubscript = \sqrt{g / h}$, the well-known expression of the natural frequency of the LIP. Overall, the boundedness condition characterizes the capturability of the LIP. We will now conclude our capturability analysis by showing how this is also the case for the VHIP.

\subsection{Capturability of the variable-height inverted pendulum}

What we have established so far is a necessary condition: if an input function belongs to $\traj_{\bfxinit,\bfxf}^c$, then it is feasible, converging and satisfies the boundedness condition. The key result of our capturability analysis is that this condition is also sufficient, \emph{i.e.} it \emph{characterizes} capture inputs:

\begin{property}
    \label{prop:capture-inputs}
    Let $\bfxinit = (\bfci, \bfcdi)$ denote a capturable state and $\bfxf = (\bfcf, \zerovec)$ a static equilibrium. Then, $t \mapsto \lambda(t), \bfr(t)$ is a capture input from $\bfxinit$ to $\bfxf$ if and only if:
    \begin{enumerate}[(i)]
		\item its values $\lambda(t)$ and $\bfr(t)$ are feasible for all $t \geq 0$,
        \item $\lim_{t \to \infty} \lambda(t) = \lambdaf(\bfcf)$ and $\lim_{t \to \infty} \bfr(t) = \bfrf(\bfcf)$,
        \item it satisfies the boundedness condition:
            \begin{equation}
                \label{eq:boundedness}
                \int_0^\infty (\lambda(t) \bfr(t) - \bfg) e^{-\Omega(t)} \dd{t} = \omegai \bfci + \bfcdi
            \end{equation}
    \end{enumerate}
    where $\omegai$ is the initial value of the unique bounded solution $\omega$ to the Riccati equation $\omegad = \omega^2 - \lambda$ and $\Omega$ is the antiderivative of $\omega$ such that $\Omega(0)=0$.
\end{property}

A proof of this property is given in Appendix~\ref{app:capture-inputs}. In the remainder of this manuscript, we will see how this is not only a theoretical but also a practical result with applications to balance control and walking pattern generation.

A noteworthy methodological point here is that the expression of the divergent component of motion is not unique. Rather, \emph{a} DCM is chosen by the roboticist. For example, in~\cite{caron2018icra} we considered a different DCM $\widetilde{\bfxi} \defeq \omega (\bfc - \bfr) + \bfcd - \bfrd$ yielding a boundedness condition written:
\begin{equation}
  \label{eq:boundednessOld}
  \int_0^\infty (\ddot{\bfr}(t)-\bfg) e^{-\Omega(t)} \dd{t} = \omegai \bfci + \bfcdi
\end{equation}
This condition is the same as~\eqref{eq:boundedness}, which can be seen by applying a double integration by parts. In the present work, we chose the DCM from Equation~\eqref{eq:xi-def} as it makes calculations simpler. We preferred a velocity-based rather than position-based DCM for the same reason, as a the differential of a product involves less operations than that of a ratio.

\section{Balance control with variable height}
\label{sec:zero-step}

Let us consider first the problem of balance control, \emph{i.e.} {zero-step} capturability. The robot pushes against a stationary contact area $\calC$ in order to absorb the linear momentum of its initial state $\bfxinit$, eventually reaching a static equilibrium $\bfxf$. This level of capturability enables push recovery~\cite{pratt2006humanoids, stephens2007humanoids, koolen2016humanoids, yamamoto2016ras} up to post-impact fall recovery in worst-case scenarios~\cite{samy2017humanoids, delprete2017hal}.

The gist of the method we propose thereafter is to reduce the three-dimensional capturability condition over $\lambda,\bfr$ (Property~2) into a one-dimensional condition over $\lambda$. To do so, we couple the evolution of $\lambda$ and $\bfr$ by a suitably-defined intermediate variable $s$. The complete pipeline goes as follows:
\begin{itemize}
    \item Change variable from time $t$ to a new variable $s$
    \item Define the CoP evolution $\bfr(s)$ as a function of $s$
    \item Reduce capturability to an optimization over $\omega(s)$
    \item Compute the optimal solution $\omega^*(s)$ of this problem
    \item Change variable from $s$ to time $t$
\end{itemize}
From $\omega^*(t)$, it is then straightforward to compute the full capture input $t \mapsto \lambda^*(t), \bfr^*(t)$ as well as the capture trajectory $\bfc^*(t)$. Let us now detail each step of this pipeline.

\subsection{Change of variable}
\label{sec:timeless}

Define the adimensional quantity:
\begin{align}
    s(t) = e^{-\Omega(t)}
\end{align}
This new variable ranges from $s=1$ when $t=0$ to $s \to 0$ when $t \to \infty$. Its time derivatives are:
\begin{align}
    \sd(t) & = -\omega(t) s(t) \label{eq:squared-vel} &
    \sdd(t) & = \lambda(t) s(t)
\end{align}
Owing to the bijective mapping between $t$ and $s$, we can define $\omega$, $\gamma$ and $\lambda$ as functions of $s$ rather than as functions of $t$. This approach is \emph{e.g.} common in time-optimal control~\cite{pham2018tro}. Let us denote by $\square'$ derivation with respect to $s$, as opposed to $\dot{\square}$ for derivation with respect to $t$. The Riccati equation~\eqref{eq:riccati-omega} of $\omega$ becomes:
\begin{align}
    \label{eq:riccati-lambda-s}
    \lambda & = \omega^2 - \omegad = \omega^2 - \sd \omega' = \omega (\omega + s \omega') = \omega (s \omega)'
\end{align}
Injecting this expression into the time integral~\eqref{eq:boundedness} of the boundedness condition yields:
\begin{equation}
    \int_{0}^\infty (\lambda(t) \bfr(t) - \bfg) s(t) \dd{t} 
    = \int_{0}^1 (\omega (s \omega)' \bfr(s) - \bfg) \frac{\dd{s}}{\omega}
\end{equation}
We can then characterize capture inputs as functions of $s$:

\begin{property}
    \label{prop:capture-inputs-s}
    Let $\bfxinit = (\bfci, \bfcdi)$ denote a capturable state and $\bfxf = (\bfcf, \bm{0})$ a static equilibrium. Then, $s \mapsto \lambda(s), \bfr(s)$ is a capture input from $\bfxinit$ to $\bfxf$ if and only if:
    \begin{enumerate}[(i)]
		\item its values $\lambda(s)$ and $\bfr(s)$ are feasible for all $s \in [0, 1]$,
        \item $\lim_{s \to 0}\lambda(s) = \lambdaf(\bfcf)$ and $\lim_{s \to 0}\bfr(s) = \bfrf(\bfcf)$,
        \item it satisfies the boundedness condition:
            \begin{equation}
                \label{eq:boundedness-s}
                \int_0^1 \bfr(s) (s \omega)' \dd{s} - \bfg \int_0^1 \frac{\dd{s}}{\omega(s)} = \omegai \bfci + \bfcdi
            \end{equation}
    \end{enumerate}
    where $\omegai$ denotes the initial value (at $s=1$) of the solution $\omega$ to the differential equation $\omega (s \omega)' = \lambda$.
\end{property}

With this reformulation, the infinite-time integral has become finite over the $[0, 1]$ interval and the antiderivative $\Omega$ has been replaced by $\omega$ itself.

\subsection{Time-varying CoP strategy}
\label{sec:tv-cop}

The CoP and gravity terms of the boundedness condition~\eqref{eq:boundedness-s} can be separated by projecting them in the (non-orthogonal) basis $(\bfe_x, \bfe_y, \bfn)$:
\begin{align}
    \label{eq:boundedness-rxy}
    \int_0^1 \bfr^{xy}(s) (s \omega)' \dd{s} = \omegai \bfci^{xy} + \bfcdi^{xy}
    \\
    \int_0^1 \frac{\dd{s}}{\omega(s)} = \frac{\omegai \hi + \hdi}{g} 
    \label{eq:boundedness-omega}
\end{align}
where $\hi \defeq h(\bfci)$ is the initial CoM height and $\hdi$ its velocity, both known from the initial state $\bfci$. The gravity term~\eqref{eq:boundedness-omega} only involves $\omega(s)$, but the CoP term~\eqref{eq:boundedness-rxy} involves both $\bfr(s)$ and $\omega(s)$. We reduce it to an integral over $\omega(s)$ by making the CoP move along a line segment\footnote{
    Strategies with two control points such as this one are the simplest one can imagine, in the sense that it is in general impossible to realize three-dimensional balance control with a stationary CoP (see Appendix~\ref{app:fixed-cop}).
} from $\bfri$ to $\bfrf$:
\begin{align}
    \label{eq:cop-line} \bfr(s) &= \bfrf + (\bfri - \bfrf) f(s \omega)
\end{align}
where $f$ can be any smooth function that satisfies:
\begin{itemize}
\item $f(\omegai) = 1$: the CoP is initially located at $\bfri$,
\item $f(0) = 0$: the CoP converges to $\bfrf$,
\item \emph{$f$ is increasing}: we exclude solutions where the CoP would move back and forth along the line segment,
\item \emph{$f$ is integrable}: let $F$ denote its antiderivative such that $F(0)=0$. It is positive by monotonicity of $f$.
\end{itemize}
The final CoP location $\bfrf$ is already known from the desired capture state $\bfxf$, but the instantaneous CoP $\bfri$ is a decision variable. In the example depicted in Figure~\ref{fig:linear-cop}, it will be chosen on the other side of the line $\bfci + \mathbb{R} \bfcdi$ compared to $\bfrf$ in order to progressively reorient $\bfcd(t)$ toward the capture state $\bfrf$, similarly to the behavior observed in the LIP with linear capture-point feedback control~\cite{sugihara2009icra, morisawa2012humanoids, englsberger2015tro}.

\begin{figure}[t]
    \centering
    \includegraphics[height=2.8cm]{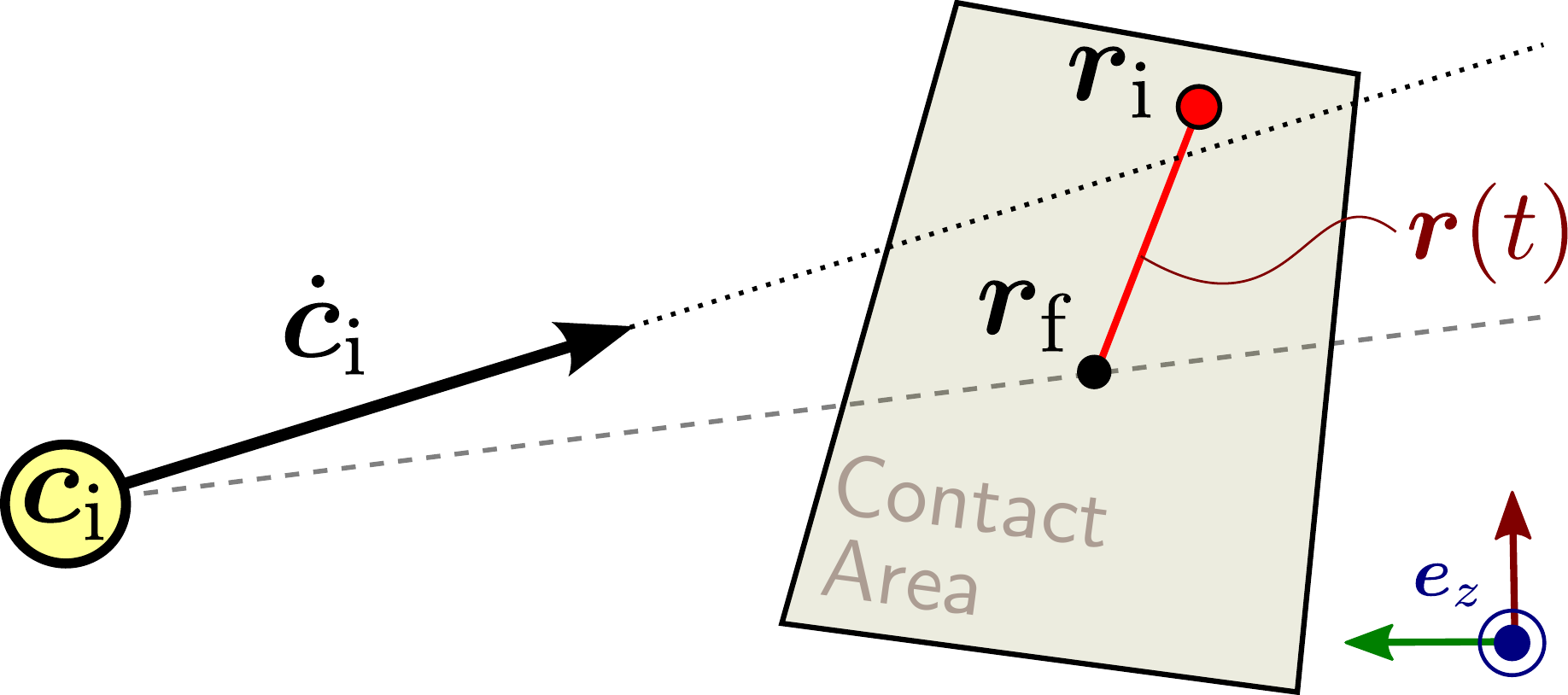}
    \caption{
        \textbf{Linear time-varying CoP trajectory.}
        Variations of the center of pressure inside the contact area allow the robot to reorient its velocity towards a capture state without re-stepping. 
    }
    \label{fig:linear-cop}
\end{figure}

Combining Equations~\eqref{eq:boundedness-rxy} and \eqref{eq:cop-line} yields:
\begin{equation}
    \label{eq:tv-cop-ri}
    \bfri^{xy} = \bfrf^{xy} + \frac{\omegai (\bfci^{xy} - \bfrf^{xy}) + \bfcdi^{xy}}{F(\omegai)} 
\end{equation}
At this stage, the roboticist can explore different CoP strategies via the choice of a function $F$. We choose a power law parameterized by $\alpha \in (0, 1)$:
\begin{equation}
    f(s \omega) = \left(\frac{s \omega}{\omegai}\right)^\frac{\alpha}{1 - \alpha} \Longrightarrow F(\omegai) = (1 - \alpha) \omegai
\end{equation}
With this choice, the horizontal coordinates of the instantaneous CoP imposed by the boundedness condition become:
\begin{equation}
    \label{eq:alpha-rixy}
    \bfri^{xy} = \bfrf^{xy} + \frac{1}{1 - \alpha} \left[\bfci^{xy} + \frac{\bfcdi^{xy}}{\omegai} - \bfrf^{xy}\right]
\end{equation}
where we recognize the same expression as in capture-point feedback control of the LIP~\cite{sugihara2009icra, morisawa2012humanoids, englsberger2015tro}. Note that the three-dimensional position $\bfri$ of the CoP is readily available from $\bfri^{xy}$ by vertical projection:
\begin{equation}
    \label{eq:below-alpha-rixy}
    \bfri = \bfri^{xy} - h(\bfri^{xy}) \bfe_z
\end{equation}

The current state $\bfxinit$ and target capture state $\bfxf$ being given, the only decision variable left on the right-hand side of Equation~\eqref{eq:alpha-rixy} is $\omegai$. At this point, we have almost reduced the CoP capturability conditions to $\omega$: with respect to Property~\ref{prop:capture-inputs-s}, $\bfr(s)$ converges to $\bfrf$ (\emph{ii}) and satisfies the boundedness condition by selecting $\bfri$ from Equation~\eqref{eq:alpha-rixy} (\emph{iii}). We now need to make sure that the CoP trajectory is feasible (\emph{i}).

By convexity of the contact area, the CoP trajectory is feasible if and only if both its ends $\bfri$ and $\bfrf$ are in the area. We assume the latter does by construction. For the former, the constraint that $\bfri$ belongs to the contact polygon can be described in halfspace representation by a matrix-vector inequality $\bfH \bfri^{xy} \leq \bfp$, with $\bfH$ an $m \times 2$ matrix and $\bfp$ an $m$-dimensional vector. For example, a rectangular contact area written in the contact frame $(\bft, \bfb, \bfn)$ as:
\begin{align}
    \pm \bft \cdot (\bfri - \bfo) & \leq X \\
    \pm \bfb \cdot (\bfri - \bfo) & \leq Y 
\end{align}
can be reformulated equivalently in the horizontal plane:
\begin{align}
    \pm (\bfb \times \bfe_z) (\bfri^{xy} - \bfo^{xy}) & \leq X (\bfe_z \cdot \bfn) \\
    \pm (\bft \times \bfe_z) (\bfri^{xy} - \bfo^{xy}) & \leq Y (\bfe_z \cdot \bfn)
\end{align}
Injecting Equation~\eqref{eq:alpha-rixy} into inequalities $\bfH \bfri^{xy} \leq \bfp$ yields:
\begin{equation}
    \label{eq:ineq-omegai}
    \left[\alpha \bfH \bfrf^{xy} + (1 - \alpha) \bfp - \bfH \bfci^{xy} \right] \omegai \geq \bfH \bfcdi^{xy}
\end{equation}
Each line of this vector inequality $\bfu \omegai \geq \bfv$ provides a lower or upper bound on $\omegai$ depending on the sign of the factor in front of it:
\begin{align}
    \label{eq:calc-omegaimin}
    \omegaimin & = \max\left(\sqrt{\lambdamin}, \max_j \left\{ \frac{v_j}{u_j}, u_j > 0 \right\}\right) \\
    \label{eq:calc-omegaimax}
    \omegaimax & = \min\left(\sqrt{\lambdamax}, \min_j \left\{ \frac{v_j}{u_j}, u_j < 0 \right\}\right)
\end{align}
We have thus reduced the feasibility condition on the CoP to an inequality constraint on $\omegai$:
\begin{equation}
    \label{eq:zs-last-step}
    \forall s, \bfr(s) \in \calC \Longleftrightarrow \omegaimin \leq \omegai \leq \omegaimax
\end{equation}

Overall, the CoP strategy allows us to reduce the three-dimensional capturability condition over $s \mapsto \lambda(s), \bfr(s)$ (Property~\ref{prop:capture-inputs-s}) into a one-dimensional condition over $s \mapsto \lambda(s)$:

\begin{property}[1D capturability]
    \label{prop:1d-capturability}
    Let $\bfxinit = (\bfci, \bfcdi)$ denote a capturable state and $\bfxf = (\bfcf, \bm{0})$ a static equilibrium. Under the time-varying CoP strategy:
    \begin{align}
        \bfr(s) & = \bfrf + (\bfri - \bfrf) \left(\frac{s \omega}{\omegai}\right)^\frac{\alpha}{1 - \alpha}
        & \alpha \in (0, 1),
    \end{align}
    $s \mapsto \lambda(s)$ is a capture input from $\bfxinit$ to $\bfxf$ if and only if:
    \begin{enumerate}[(i)]
		\item $\omegai \in [\omegaimin, \omegaimax]$ and $\forall s \in [0, 1], \lambda(s) \in [\lambdamin, \lambdamax]$,
        \item $\lim_{s \to 0}\lambda(s) = \lambdaf(\bfcf)$,
        \item it satisfies the boundedness condition:
            \begin{equation}
                \int_0^1 \frac{\dd{s}}{\omega(s)} = \frac{\omegai \hi + \hdi}{g} 
            \end{equation}
    \end{enumerate}
    where $\omegai$ denotes the initial value (at $s=1$) of the solution $\omega$ to the differential equation $\omega (s \omega)' = \lambda$.
\end{property}

\subsection{Formulation as an optimization problem}
\label{sec:formulating}

Let us compute piecewise-constant functions $s \mapsto \lambda(s)$ that satisfy the three conditions from Property~\ref{prop:1d-capturability}. We partition the interval $[0, 1]$ into $n-1$ fixed segments $0 = s_0 < s_1 < \ldots < s_{n-1} < s_n = 1$ such that $\forall s \in (s_j, s_{j+1}], \lambda(s) = \lambda_j$. Note how the interval is closed to the right ($\lambda(s_{j+1}) = \lambda_j$) but open to the left. Define:
\begin{align}
    \label{eq:varphi-def}
    \varphi(s) & \defeq s^2 \omega^2 & \delta_j & \defeq s_{j+1}^2 - s_j^2
\end{align}
The quantity $\varphi$ represents a squared velocity and is commonly considered in time-optimal retiming~\cite{pham2018tro}. Remarking that $\varphi' = 2 s \lambda$ from the Riccati equation~\eqref{eq:riccati-lambda-s}, we can directly compute $\varphi(s)$ for $s \in [s_j, s_{j+1}]$ as:
\begin{equation}
    \label{eq:varphi-s}
    \varphi(s) = \sum_{k=0}^{j-1} \lambda_k \delta_k + \lambda_j (s^2 - s_j^2) = \varphi(s_j) + \lambda_j (s^2 - s_j^2)
\end{equation}
In what follows, we use the shorthand $\varphi_j \defeq \varphi(s_j)$. The values $\lambda(s)$ and $\omega(s)$ for $s \in (s_j, s_{j+1}]$ can be computed back from $\varphi$ using Equations~\eqref{eq:varphi-def} and \eqref{eq:varphi-s}:
\begin{align}
    \label{eq:lambda-omega-from-phi}
    \lambda_j & = \frac{\varphi_{j+1} - \varphi_{j}}{\delta_j} &
    \omega(s) & = \frac{1}{s} \sqrt{\varphi_j + \lambda_j (s^2 - s_j^2)}
\end{align}
Owing to this property, we choose the vector $\bfvarphi \defeq \BIN \varphi_1 \ldots \varphi_n \BOUT$ to be the decision variable of our optimization problem. Note that this vector starts from $\varphi_1$, as $\varphi_0 = 0$ by definition.

\paragraph{Feasibility (i)} noting how $\varphi_n = \omegai^2$ from the equation above, both feasibility conditions can be expressed as:
\begin{align}
    & \omegaimin^2 \leq \varphi_n \leq \omegaimax^2 \\
    & \forall j < n,\ \lambdamin \delta_j \leq \varphi_{j+1} - \varphi_j \leq \lambdamax \delta_j
\end{align}

\paragraph{Convergence (ii)}
$\lambda(s)$ converges to $\lim_{s \to 0} \lambda(s) = \lambda_1 = \varphi_1 / \delta_0$. Convergence to $\lambdaf$ can thus be expressed as:
\begin{equation}
    \varphi_1 = \delta_0 \lambdaf = \frac{\delta_0 g}{\hf}
\end{equation}
(Recall that $\varphi_1$ corresponds to the last time interval by definition of $s$.) The parameter $\hf$ corresponds to the CoM height of the capture state. 

\paragraph{Boundedness (iii)}
the variables $\varphi_j$ can also be used to express the integral as a finite sum:
\begin{align}
    \int_0^1 \frac{\dd{s}}{\omega(s)}
    & = \sum_{j=0}^{n-1} \int_{s_j}^{s_{j+1}} \frac{s \dd{s}}{\sqrt{\varphi_j + \lambda_j (s^2 - s_j^2)}} \\
    & = \sum_{j=0}^{n-1} \int_{0}^{\delta_j} \frac{\dd{v}}{2 \sqrt{\varphi_j + \lambda_j v}} \\
    & = \sum_{j=0}^{n-1} \frac{1}{\lambda_j} \left[\sqrt{\varphi_j + \lambda_j \delta_j} - \sqrt{\varphi_j}\right] \\
    & = \sum_{j=0}^{n-1} \frac{\delta_j}{\sqrt{\varphi_{j+1}} + \sqrt{\varphi_j}} \label{eq:conv-obj}
\end{align}

To complete the optimization problem, we add a regularizing cost function over variations of $\lambda$ so that the ideal behavior becomes a constant $\lambda = \omega_\csubscript^2$, \emph{i.e.} the LIP model. This way, height variations are only added when required. This behavior is \emph{e.g.} depicted in Figure~\ref{fig:zero-step-behavior}, where linear capture trajectories are used until the CoP reaches the boundary of the support area and height variations are used for additional braking.

The complete optimization problem is assembled in Equation~\eqref{eq:optim-full}. We will refer to it as the \emph{capture problem}.

\floatname{algorithm}{Capture Problem}
\renewcommand{\thealgorithm}{}
\newcounter{optimhighlight}
\setcounter{optimhighlight}{\value{algorithm}}
\begin{algorithm}[t]
    \caption{\hfill Equation~\eqref{eq:optim-full}}
    \label{algo:capture}
    \vspace{.25em}
    \emph{Parameters:}
    \begin{itemize}
    \item Feasibility bounds $(\lambdamin, \lambdamax)$ and $(\omegaimin, \omegaimax)$
    \item Initial height $\hi$, its derivative $\hdi$, and target height $\hf$
    \item Discretization steps $\delta_1, \ldots, \delta_n$
    \end{itemize}
    \begin{subequations}
        \label{eq:optim-full}
        \begin{align}
            \minimize_{\bfphi \in \bbR^n}\ 
            & \sum_{j=1}^{n-1} \left[ 
            \frac{\varphi_{j+1}-\varphi_j}{\delta_j} - \frac{\varphi_j - \varphi_{j-1}}
            {\delta_{j-1}} \right]^2
            \label{eq:cvx-cost}
            \\
            \subjto\ 
            &
            \sum_{j=0}^{n-1} \frac{\delta_j}{\sqrt{\varphi_{j+1}} + \sqrt{\varphi_j}} 
            - \frac{\hi \sqrt{\varphi_n} + \hdi}{g} = 0
                    \label{eq:conv-cons-3d} \\
            &
            \omegaimin^2 \leq \varphi_n \leq \omegaimax^2 \label{eq:omega-i-3d} \\
            &
            \forall j < n,\ \lambdamin \delta_j \leq \varphi_{j+1} - \varphi_j \leq
            \lambdamax \delta_j \label{eq:optim-full-ineq} \\
            &
            \varphi_1 = \delta_0 g / \hf \label{eq:cvx-cons-last}
        \end{align}
    \end{subequations}
\end{algorithm}
\renewcommand{\thealgorithm}{\arabic{algorithm}}
\setcounter{algorithm}{\value{optimhighlight}}
\floatname{algorithm}{Algorithm}

\subsection{Computation and behavior of CoM capture trajectories}

Suppose that we know the solution $\bfvarphi^*$ to the capture problem, \emph{e.g.} computed using the solver from Section~\ref{sec:optim}. The first step to return from $s$ to time trajectories is to calculate the times $t_j = t(s_j)$ where the stiffness $\lambda$ switches from one value to the next.

Recall how the piecewise-constant values of $\lambda$ are readily computed from $\bfvarphi$ via Equation~\eqref{eq:lambda-omega-from-phi}. On an interval $[t_{j+1}, t_j)$ where $\lambda(t) = \lambda_j$ is constant, we can solve the differential equation $\sdd = \lambda_j s$ to obtain:
\begin{equation}
    \label{eq:s-of-t}
    s(t) = s_{j+1} \left[
        \cosh(x_j(t)) - \frac{\omega(s_{j+1})}{\sqrt{\lambda_j}} \sinh (x_j(t))
        \right]
\end{equation}
where $x_j(t) = \sqrt{\lambda_j} (t - t_{j+1})$.
Solving for the boundary condition $s(t_j) = s_j$ yields the next time $t_j = t(s_j)$:
\begin{equation}
    \label{eq:time-mapping}
    t(s_j) = t_{j+1} + \frac{1}{\sqrt{\lambda_j}} \log \left( \frac{\sqrt{\varphi_{j+1}} + \sqrt{\lambda_j} s_{j+1}}{\sqrt{\varphi_j} + \sqrt{\lambda_j} s_j}
    \right)
\end{equation}
By backward recursion, we can thus compute the time partition $0 = t_n < t_{n-1} < \ldots < t_1 < \infty$. The set of values $(\lambda_j, t_j)$ characterizes $\lambda(t)$ as a function of time.

Given both $s(t)$ from Equation~\eqref{eq:s-of-t} and $\omega(s)$ from Equation~\eqref{eq:lambda-omega-from-phi}, we can compute the CoP trajectory:
\begin{equation}
    \bfr(t) = \bfrf + (\bfri - \bfrf) \left(
        \frac{s(t) \omega(s(t))}{\sqrt{\varphi_n}}
        \right)^{\frac{\alpha}{1-\alpha}}
\end{equation}
Finally, once both components $\lambda(t)$ and $\bfr(t)$ of the capture input are known, we can compute the CoM capture trajectory by forward integration of the VHIP equation of motion~\eqref{eq:vhip}. Figure~\ref{fig:zero-step-behavior} shows various such trajectories for zero-step capture scenarios with the same initial state $\bfxinit$ but different contact locations.

\begin{figure}[t]
    \centering
    \includegraphics[height=2.8cm]{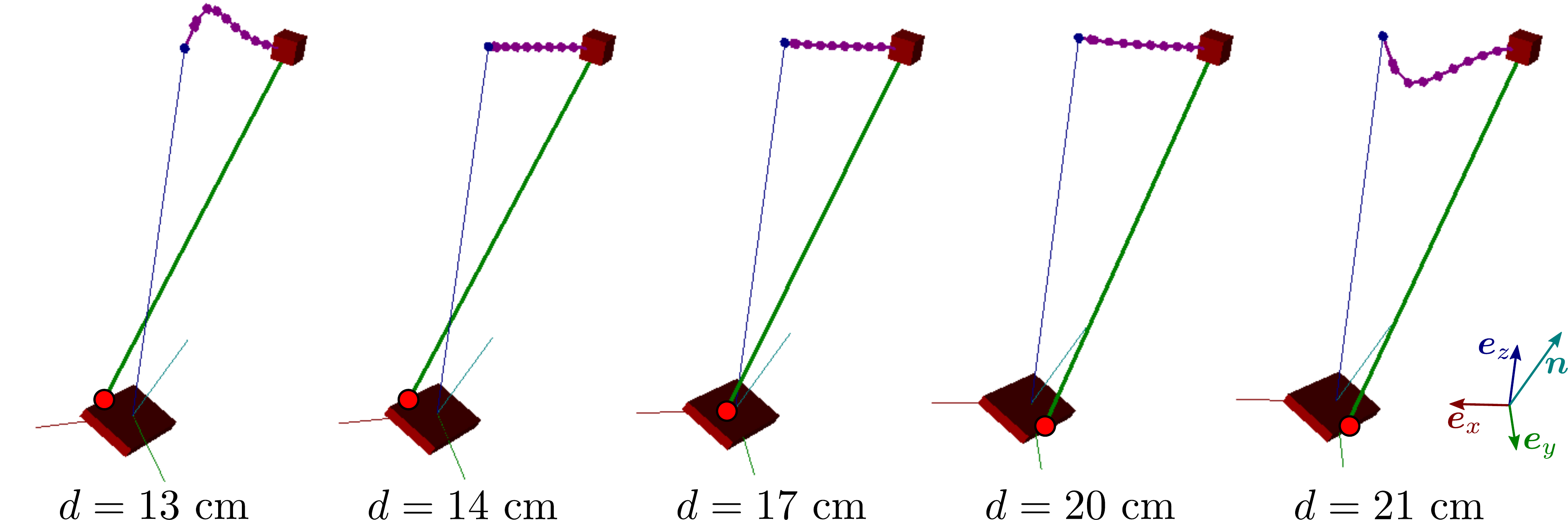}
    \caption{
        \textbf{Zero-step capture trajectories for different contact locations.} The horizontal distance $d$ from CoM to contact varies while the initial CoM position and velocity are kept constant. Red discs indicate the instantaneous CoP location $\bfri$. The regularization cost keeps the trajectory as close as possible to a LIP via CoP variations. When there is no linear solution, height variations are resorted to for additional acceleration or braking.
    }
    \label{fig:zero-step-behavior}
\end{figure}

\section{Optimization of capture problems}
\label{sec:optim}

While the capture problem~\eqref{eq:optim-full} belongs to the general class of nonconvex optimization, we can leverage its structural properties to solve it much faster than a generic nonconvex problem. This Section presents a dedicated solver that can solve capture problems within tens of \emph{microseconds}. Readers more interested in walking pattern generation can skip directly to Section~\ref{sec:one-step}.

In what follows, we assume that the reader is already familiar with common knowledge in numerical optimization, including the active-set method for quadratic programming (QP) and the sequential quadratic programming (SQP) method for nonlinear optimization. An overview of this background is provided for reference in Appendix~\ref{app:optim-background}.

\subsection{Problem reformulation}

The capture problem~\eqref{eq:optim-full} has a linear least squares cost, linear constraints and \emph{a single one-dimensional nonlinear equality constraint}. Its objective~\eqref{eq:cvx-cost} can be written $\left\|\bfJ \bfphi \right\|^2$ where the cost matrix $\bfJ$ is the $(n-1) \times n$ matrix given by:
\begin{equation*}
  \bfJ = \BIN -d_0 -d_1 &  d_1 & & &\\
							 d_1 & -d_1 -d_2  & d_2 &&\\
							 & & \ddots & & \\
							 && \hspace{-10pt}d_{n-2}  &-d_{n-2} -d_{n-1} & d_{n-1} \BOUT
\end{equation*}
with $d_j = \delta_j^{-1}$. Equation~\eqref{eq:conv-cons-3d} rewrites to $b(\bfphi) = 0$ with: 
\begin{equation}
    \label{eq:b-varphi}
    b(\bfphi) = \sum_{j=0}^{n-1} \dfrac{\delta_j}{\sqrt{\varphi_{j+1}} + \sqrt{\varphi_j}} - \frac{\hi \sqrt{\varphi_n} + \hdi}{g}
\end{equation}
where $\hi > 0$ and $\hdi \in \bbR$ are problem parameters, \emph{i.e.} constant during the optimization. Linear constraints~\eqref{eq:omega-i-3d}--\eqref{eq:cvx-cons-last} have the form:
\begin{equation}
    \bfl \leq \bfC \bfphi \leq \bfu
\end{equation}
where the two vectors $\bfl\in \bbR^{n+1}$ and $\bfu \in \bbR^{n+1}$ are also problem parameters, and $\bfC$ is the $(n+1) \times n$ matrix:
\begin{equation*}
  \bfC = \BIN \bfC_Z \\ \bfe_n^T \BOUT
  \text{ where }
	\bfC_Z = \BIN  1 &        &        &  \\
	              -1 &    1   &        &  \\
						       & \ddots & \ddots &  \\ 
						       &        &    -1  & 1\BOUT 
\end{equation*}
with $\bfe_n$ is the last column of the $n \times n $ identity matrix. Equality constraints are specified by letting $l_j = u_j$.

Solutions to the capture problem~\eqref{eq:optim-full} can be approximated by solving:
\begin{subequations}
\label{eq:transformed-problem}
\begin{align}
  \minimize_{\bfphi \in \bbR^n} & \ \frac{1}{2}\left\|\bfJ \bfphi \right\|^2 + \frac{\mu^2}{2} \left\| b(\bfphi) \right\|^2 \label{eq:least-squares-obj}\\
	\subjto & \ \bfl \leq \bfC \bfphi \leq \bfu \label{eq:linear-constraints}
\end{align}
\end{subequations}
which presents the advantage of having only linear constraints, and whose solution tends to the original solution as $\mu$ goes to infinity. This reformulation is reminiscent of penalty-based methods, where the penalty parameter $\mu$ would be adapted during successive iterations. Yet, a fixed parameter (typically $\mu = 10^6$) is enough to get a precise solution in practice, so that we do not need to adapt this parameter.

\subsection{Application of an SQP approach}
\label{seq:sqp-approach}

We apply the SQP method to the reformulation~\eqref{eq:transformed-problem}. Let us denote by $f(\bfvarphi)$ the objective~\eqref{eq:least-squares-obj} of the problem and $\bfj \defeq \bfnabla_{\bfphi} b$ the gradient of the nonlinear constraint. The Lagrangian of problem~\eqref{eq:transformed-problem} is 
\begin{equation*}
    \calL(\bfvarphi, \bflambda_-, \bflambda_+)  = f(\bfvarphi) + \bflambda_-^T (\bfl - \bfC \bfphi) + \bflambda_+^T (\bfC \bfphi - \bfu)
\end{equation*}
with $\bflambda_-, \bflambda_+ \in \bbR^{n+1}$ the corresponding Lagrange multipliers.

Let us index by $\square_k$ the value of any quantity at iteration $k$ of the SQP method. The Hessian matrix at iteration $k$ is then:
\begin{align}
\bfnabla^2_{\bfphi\bfphi} \calL_k & = \bfJ^T \bfJ + \mu^2 \bfj_k \bfj_k^T + b_k \bfnabla^2_{\bfphi\bfphi}b_k
\end{align}
We adopt the Gauss-Newton approximation $\bfnabla^2_{\bfphi\bfphi} \calL_k \approx \bfJ^T \bfJ + \mu^2 \bfj_k \bfj_k^T$, a classical approach for nonlinear least squares. It is particularly well-suited to the present case, as $\bfnabla^2_{\bfphi\bfphi} \calL$ is exactly equal to $\bfJ^T \bfJ + \mu^2 \bfj \bfj^T$ when the boundedness condition $b=0$ is satisfied. Under this approximation, the problem for one iteration of the SQP method becomes:
\begin{subequations}
\label{eq:LSI}
\begin{align}
  \minimize_{\bfp \in \bbR^n} & \ \frac{1}{2} \left\|\bfJ\bfp + \bfJ\bfphi_k\right\|^2 + \frac{\mu^2}{2}\left\|\bfj_k^T \bfp + b_k\right\|^2 \label{eq:subproblem-obj}\\
	\subjto & \ \bfl'_k \leq \bfC \bfp \leq \bfu'_k \label{eq:subproblem-constr}
\end{align}
\end{subequations}
where $\bfl'_k \defeq \bfl - \bfC \bfphi_k$ and $\bfu'_k \defeq \bfu - \bfC \bfphi_k$.
If $\bfphi_k$ satisfies the linear constraints~\eqref{eq:linear-constraints}, then $\bfp = 0$ is a feasible point for the problem~\eqref{eq:LSI}. This problem is a linear least squares with inequality constraints (LSI), a particular case of QP, that we can solve using the active-set method (Algorithm~\ref{alg:active-set}).
Adopting $\bfd$ and $j$ for the step and iteration number of the QP (keeping $\bfp$ and $k$ for the SQP), an iteration of the active-set method solves in this case:
\begin{subequations}
\begin{align}
    \minimize_{\bfd \in \bbR^n} & \ \frac{1}{2} \left\|\bfJ (\bfd + \bfp_j + \bfphi_k)\right\|^2 + \frac{\mu^2}{2}\left\|\bfj_k^T (\bfd + \bfp_j) + b_k\right\|^2 \\
      \subjto & \ \bfC_{\calW_j} \bfd = 0
\end{align}
\end{subequations}
with $\calW_j$ the set of active constraints at the current iteration $j$ and $\bfC_{\calW_j}$ the corresponding matrix.

This can be solved in two steps using the nullspace approach. First, compute a matrix $\bfN_{\calW} \in \bbR^{n \times n-r}$ whose columns form a basis of the nullspace of $\bfC_{\calW}$, $r$ being the rank of $\bfC_{\calW}$. The vector $\bfd$ is then solution of the problem if and only if $\bfd = \bfN_{\calW}\bfz$ for some $\bfz\in \bbR^{n-r}$. The problem can thus be rewritten as an unconstrained least squares:
\begin{equation}
  \minimize_{\bfz \in \bbR^{n-r}} \quad \frac{1}{2} \left\|\BIN \mu \bfj^T \\ \bfJ \BOUT \bfN_{\calW} \bfz + \BIN \mu(\bfj^T \bfp_j + f) \\ \bfJ(\bfp_j + \bfphi) \BOUT \right\|^2
\end{equation}
Second, solve this unconstrained problem: taking $\bfT$ and $\bfu$ such that the above objective writes $\frac{1}{2} \left\|\bfT z + \bfu\right\|^2$, compute the QR decomposition $\bfT = \bfQ \bfR$, and solve $\bfQ \bfR \bfz = - \bfu$. The latter is equivalent to $\bfz = -\bfR^{-1} \bfQ^T \bfu$ if $\bfR$ has full rank~\cite[Chapter~10]{nocedal:book:2006}. Both of these steps can be significantly tailored to the case of capture problems.

\subsection{Tailored operations}
\label{subsec:tailored}

In the SQP, most of the time is spent in solving the underlying LSI: the computation of $\bfN_{\calW}$, the post-multiplication by $\bfN_{\calW}$ to obtain $\bfT$, the QR decomposition of $\bfT$ and the computation of the Lagrange multipliers are the main operations, performed each roughly in $O(n^3)$~\cite{golub:book:1996}, at least for the first iteration of each LSI.\footnote{
    We could refine these estimates by taking into account the number of active constraints. Note also that subsequent LSI iterations can perform some of these operations in $O(n^2)$.
} We can reduce this complexity to at most $O(n^2)$ for capture problems:

\begin{itemize}
    \item The matrix $\bfN_{\calW}$ does not need to be computed explicitly. Rather, the cost matrix $\bfT$ of the unconstrained least squares problem can be built directly in $O(n)$ operations by taking advantage of the structure of $\bfC_{\calW}$ and $\bfJ$ (Appendix~\ref{app:build-T})
    \item Lagrange multipliers needed to check KKT conditions can be computed in $O(n^2)$ by taking advantage of the structure of $\bfC_{\calW}$ (Appendix~\ref{app:lagrange-multipliers})
    \item The QR decomposition of $\bfT$ can be carried out in $O(n^2)$ by leveraging the tridiagonal structure of $\bfJ_{\calW}$ (Appendix~\ref{app:qr-T})
\end{itemize}

The last point to consider is finding an initial pair $(\bfphi_0,\bflambda_0)$ for the SQP. While in a classical SQP this is done through a so-called \emph{Phase I} which can be almost as costly as running the main loop of the algorithm itself, we can leverage the geometry of our constraints to get such a pair in $O(n)$ (Appendix~\ref{app:initial-point}). 

\subsection{Numerical and algorithmic considerations}

The implementation of a general-purpose QP or SQP solver is an extensive work due to the numerous numerical difficulties that can arise in practice: active-set methods need to perform a careful selection of their active constraints in order to keep the corresponding matrix well conditioned, while SQPs require several refinements, some of which imply solving additional QPs at each iteration~\cite{nocedal:book:2006}. 
While the tailored operations we presented reduce the theoretical complexity w.r.t. general-purpose solvers, there are also a number of features of Problem~(\ref{eq:transformed-problem}) that allow us to stick with a simple, textbook implementation, and contribute to the general speed-up.

On the QP side, the matrix $\bfC_Z$ is always full rank and well conditioned, while the last row $\bfe_n^T$ of $\bfC$ is a linear combination of \emph{all} rows from $\bfC_Z$ ($\bfe_n = \bfC_Z^T \bm{1}_n$). As a consequence, all matrices $\bfC_\calW$ are full rank and well conditioned, save for the case where all $n+1$ constraints are active. This case can be easily detected and avoided.\footnote{
    With the notations of Appendix~\ref{app:initial-point}, this can only happen if $a^-$ or $a^+$ is equal to $0$ or $1$. A workaround is then to slightly perturb $\omegaimin$ or $\omegaimax$.
} It is thus safe to use a basic active-set scheme.

All QR decompositions are performed on matrices with rank deficiency of at most one. As a consequence, it is not necessary to use more involved column-pivoting algorithms, and the rank deficiency can be detected by simply monitoring the bottom-right element of the triangular factor. While we don't prove that the matrices $\bfJ_\calW$ are well conditioned, we verified this assertion for $n \leq 20$ in a systematic way. Even for large values of $\mu$, the QR decomposition of $\bfT$ is stable as the row with largest norm appears first~\cite[p. 169]{bjorck:book:1996}.

\subsection{Pre-computation of QR decompositions}

It is important to note that the matrices $\bfJ$ and $\bfC$ only depend on the problem size $n$ and partition $s_0, \ldots, s_n$, which are the same across all capture problems that we solve in this work. If $n$ is small enough (say $n \leq 20$), we can precompute and store the QR decompositions of \emph{all} possible $\bfJ_\calW \defeq \bfJ \bfN_{\calW}$. Note that there are $2^{n+1}-1$ different sets $\calW$, including all combinations of up to $n$ active constraints among $n+1$. These pre-computations can be done in a reasonable amount of time thanks to our specialized QR decompositions, and result in even faster resolution times.

On the SQP side, the odds are very favorable: constraints are linear and the Gauss-Newton approach offers a good approximation of the Hessian matrix. As a consequence, we observed that the method takes full steps
$98.5\%$ of the time in practice, and converges in very few iterations (four on average).
Since we start from a feasible point, all subsequent iterates are guaranteed to be feasible and the line search needs only monitor the objective function, in an unconstrained-optimization fashion.

\subsection{Performance comparison}
\label{subsec:optim-perf}

\begin{table}
    \caption{
        Computation times over 20000 sample problems from walking pattern generation.
        Averages and standard deviations in $\mu$s.
    }
		\vspace{-5pt}
    \label{table:times}
    \centering
    \setlength\tabcolsep{4pt}  
    \begin{tabular}{lrrrr}
        \textbf{Solver}    & $n = 10$        & $n = 15$         & $n = 20$             & $n = 50$ \\
        \hline
        IPOPT\footnotemark & $7.1 \times 10^3$ & $9.4 \times 10^3$ & $1.1 \times 10^4$ & $2.2 \times 10^4$ \\
        SQP + LSSOL        & $86 \pm 60$     & $130 \pm 86$     & $220 \pm 160$        & $1700 \pm 1700$ \\
        SQP + cLS          & $22 \pm 12$     & $33 \pm 18$      & $54 \pm 41$          & $\mathbf{210 \pm 180}$ \\
        SQP + cLS + pre.   & $\mathbf{18 \pm 10}$     & $\mathbf{25 \pm 14}$      & $\mathbf{35 \pm 22}$          & -- \\
        \hline
    \end{tabular}
		\vspace{-5pt}
\end{table}

\footnotetext{
    We only report averages for IPOPT computation times as they lie on a different scale. These averages are higher than those reported in~\cite{caron2018icra} because we evaluate both feasible and unfeasible problems (for reasons made clear in the next section), while all random initial conditions in~\cite{caron2018icra} were zero-step capturable. For $n=10$ and projecting performance statistics on feasible problems only, IPOPT's computation times decrease to $1600 \pm 790$ $\mu$s.
}

In~\cite{caron2018icra}, Problem~\eqref{eq:optim-full} was solved with the state-of-the-art solver IPOPT~\cite{wachter2006springer}, which is written in Fortran and C. We compare its performances with our tailored SQP approach, implemented in C++.\footnote{
    \url{https://github.com/jrl-umi3218/CaptureProblemSolver}
} Taking $\mu = 10^6$, the solutions returned by both methods are numerically equivalent (within $10^{-7}$ of one another, and $\left|b(\bfphi)\right| \approx 10^{-8}$ in both cases). 
Computation times over representative problems produced during walking pattern generation are reported in Table~\ref{table:times}, where our approach is denoted \emph{SQP + cLS} (custom least squares), and the abbreviation \emph{pre.} denotes the use of QR pre-computations for $\bfJ_\calW$.
In practice, we work with $n=10$, for which our solver is 300--400 times faster than the generic solver IPOPT.

Computation times for QR pre-computations range from 2~ms for $n=10$, 100~ms for $n=15$, to 4.9~s for $n=20$. This is not limiting in practice, as these computations are performed only once at startup. The limit rather lies with memory consumption, which follows an exponential law ranging from 2~MB for $n=10$ to 5~GB for $n=20$. 

To break down how much of the speed-up is due to the problem reformulation and how much is due to our custom least squares implementation, we also test our SQP method using the state-of-the-art least squares solver LSSOL~\cite{gill:tech:1986}. This variant is denoted by \emph{SQP + LSSOL}.
For $n \geq 25$, SQP + LSSOL starts to fail on some problems, with a failure rate of roughly $25\%$ for $n = 50$. This suggests that the least squares component of our solver is more robust. There are two plausible explanations for this. First, LSSOL assumes all matrices are dense, while we leverage sparsity patterns of $\bfJ$ and $\bfC$. Second, LSSOL treats all coefficients as floating-point numbers, while knowing that the elements of $\bfC$ are exactly $1$ or $-1$ allows us to carry out exact computations (most notably for the nullspace and pseudoinverse of $\bfC_\calW$).

\section{Walking pattern generation}
\label{sec:one-step}

While zero-step capturability enables push recovery, \emph{one-step} capturability is the minimum price to pay for walking. 

\subsection{Time-varying CoP strategy}

In a one-step capture setting, the contact area switches instantaneously at a given instant $\sc \in (0, 1)$: 
\begin{align}
    \calC(s) & = \begin{cases}
        \calCi & \text{for } \sc < s \leq 1 \\ 
        \calCf & \text{for } 0 < s \leq \sc
        \end{cases}
\end{align}
Due to this discontinuity, we cannot adopt the same line-segment CoP strategy that we used in Section~\ref{sec:zero-step} for balance control. Let us then adopt a piecewise-constant CoP trajectory:
\begin{align}
    \label{eq:cons-cop-strat}
    \bfr(s) & = \begin{cases}
        \bfri & \text{for } \sc < s \leq 1 \\
        \bfrf & \text{for } 0 < s \leq \sc
        \end{cases}
\end{align}
This choice yields the following boundedness condition~\eqref{eq:boundedness-s}:
\begin{equation}
    -\bfg \int_0^1 \frac{\dd{s}}{\omega(s)} = \omegai (\bfci -\bfri) + (\bfri - \bfrf) \sc \omega(\sc) + \bfcdi
\end{equation}
In terms of our optimization variable $\bfvarphi$, the right-hand side of this equation is:
\begin{equation}
     (\bfci - \bfri) \sqrt{\varphi_n} + (\bfri - \bfrf) \sqrt{\varphi(\sc)} + \bfcdi
\end{equation}
where $\varphi(\sc)$ is a linear combination of $\varphi_j$ and $\varphi_{j+1}$ when $\sc \in [s_j, s_{j+1}]$ from Equation~\eqref{eq:varphi-s}.

Let us define an external parameter $\alpha \in (0, 1)$. This time, we parameterize the contact switch $\sc$ of the CoP trajectory by $\alpha$ as follows:
\begin{equation}
    \label{eq:sc-choice}
    \sc \text{ is the scalar s.t. } \sqrt{\varphi(\sc)} = \alpha \sqrt{\varphi_n}
\end{equation}
Under this assumption, the right-hand side of the boundedness condition then simplifies to:
\begin{equation}
\label{eq:reduc-alpha}
(\bfci - \bfr_\alpha) \sqrt{\varphi_n} + \bfcdi
\end{equation}
where $\bfr_\alpha \defeq \alpha \bfrf + (1 - \alpha) \bfri$ is the \emph{equivalent CoP}~\cite{koolen2012ijrr} of the time-varying CoP trajectory. In the LIP with constant $\omega$, Equation~\eqref{eq:sc-choice} rewrites to $\alpha = e^{-\omega t_\csubscript}$,\footnote{
    This expression corresponds to the limit of the scalar $w'$ when $\Delta t_1' \to \infty$ in Equation~(25) of~\cite{koolen2012ijrr}. It also plays an important role in the step timing adjustment method from~\cite{khadiv2016humanoids}.
} which shows how $\alpha$ parameterizes the time of contact switch. This observation still holds for the VHIP, although we don't know the mapping $\tc(\alpha)$ in closed-form any more, and will prove useful thereafter.

Taking the dot product of this equation with the normal $\bfni$ of the initial contact $\calCi$ yields: 
\begin{align}
    \label{eq:reduc-bz}
    \int_0^1 \frac{\dd{s}}{\omega(s)} & = \frac{\halpha \sqrt{\varphi_n} + \hdi}{g} &
    \halpha & \defeq \frac{(\bfci - \bfr_\alpha) \cdot \bfni}{(\bfe_z \cdot \bfni)}
\end{align}
This constraint is exactly of the form~\eqref{eq:conv-cons-3d} of the capture problem~\eqref{eq:optim-full}, where the parameter $\hi$ has simply been replaced by $\halpha$. 

The horizontal components of Equation~\eqref{eq:reduc-alpha} give:
\begin{align}
    \label{eq:reduc-bxy}
    \bfri^{xy} & = \bfrf^{xy} + \frac{1}{1 - \alpha} \left[\bfci^{xy} + \frac{\bfcdi^{xy}}{\omegai} - \bfrf^{xy} \right]
\end{align}
This expression is identical to Equation~\eqref{eq:alpha-rixy}. Following the same steps~\eqref{eq:below-alpha-rixy}--\eqref{eq:zs-last-step} as in the zero-step setting, we therefore conclude that capture inputs under the CoP strategy~\eqref{eq:cons-cop-strat}--\eqref{eq:sc-choice} are characterized by the 1D capturability condition (Property~\ref{prop:1d-capturability}). In particular, we can compute them using our capture problem solver.


\subsection{External optimization over contact switching time}

The contact switching time $\tc(\alpha)$ plays a crucial role in one-step capture trajectories.
Given the solution $\bfvarphi_\alpha$ to the capture problem parameterized by $\alpha$, we can compute it as follows. First, compute $\sc$ using Equation~\eqref{eq:lambda-omega-from-phi}:
\begin{equation}
    \sc = \sqrt{s_j^2 + \frac{\alpha^2 \varphi_{\alpha,n} - \varphi_{\alpha,j}}{\lambda_{\alpha,j}}}
\end{equation}
where $\varphi_\alpha(\sc) = \alpha^2 \varphi_{\alpha,n} \in [\varphi_{\alpha,j}, \varphi_{\alpha,j+1}]$. Then, apply the mapping $\tc = t(\sc)$ provided by Equation~\eqref{eq:time-mapping}.

While this mapping is straightforward to compute numerically, its high nonlinearity suggests that introducing time constraints into capture problems would radically affect the design of the dedicated solver from Section~\ref{sec:optim}.
Fortunately, computation times achieved by this solver allow us to solve \emph{hundreds} of capture problems per control cycle. We therefore choose to optimize jointly over $\bfvarphi$ and $\alpha$ using a two-level decomposition: 
an external optimization over $\alpha \in (0, 1)$, wrapping an internal optimization where $\alpha$ is fixed and the existing capture problem solver is called to compute $\bfvarphi_\alpha$.

A straightforward way to carry out the external optimization is to test for a large number of values $\alpha \in (0, 1)$. This approach is however inefficient in practice as it sends a significant amount of unfeasible problems to the internal optimization. The main cause for such unfeasibility is that whole intervals of values for $\alpha$ result in $\omegaimin(\alpha) > \omegaimax(\alpha)$, for which the CoP constraint~\eqref{eq:omega-i-3d} is obviously unfeasible. To avoid this, we propose in Appendix~\ref{app:feasible-intervals} an algorithm to pre-compute the set of intervals $[\alphamin, \alphamax] \subset (0, 1)$ on which it is guaranteed that $\omegaimin \leq \omegaimax$.

During bipedal walking, contact switches can only be realized after the free foot has completed its swing trajectory from a previous to a new contact. We therefore need to make sure that $\tc(\alpha)$ is greater than the remaining duration $\tswing$ of the swing foot motion. This gives us an additional constraint to be enforced by the external optimization:
\begin{equation}
    \label{eq:tc-ineq}
    \tc(\alpha) \geq \tswing
\end{equation}
where each evaluation of $\tc(\alpha)$ costs the resolution of a full capture problem. We find such solutions by sampling $n_\alpha$ values of $\alpha$ per feasible interval $[\alphamin, \alphamax]$. This approach provides automatic step timings, and tends to output pretty dynamic gaits.

Alternatively, when step timings are provided one may want to enforce an equality constraint:
\begin{equation}
    \label{eq:tc-eq}
    \tc(\alpha) = \tswing
\end{equation}
In this case, the external optimization implements an approximate gradient descent, calling the capture problem solver multiple times to evaluate $\nabla_\alpha \tc$ and  converge to a local optimum on each feasible interval $[\alphamin, \alphamax]$. Although we don't prove it formally, we observed in practice that the mapping $\tc(\alpha)$ seems to be always monotonic (as in the LIP) and the optimum thus found global on its interval.

\subsection{Pattern generation from capture trajectories}

We can generate walking from capture trajectories via a two-state strategy~\cite{caron2017iros, sugihara2017iros} combining zero-step and one-step capture problems. The robot starts in a double-support posture, and computes a one-step capture trajectory to transfer its CoM to its first support foot.
\begin{itemize}
    \item \textbf{Single support phase:} swing foot tracks its pre-defined trajectory while the CoM follows a one-step capture trajectory updated in a model predictive control fashion. The phase ends at swing foot touchdown.
    \item \textbf{Double support phase:} the CoM follows a zero-step capture trajectory towards the target contact. Meanwhile, the capture problem solver is called to compute a one-step capture trajectory towards the next contact. The phase ends as soon as such a trajectory is found.
\end{itemize}
The behavior realized by this state machine is conservative: after touchdown, the robot uses its double support phase to slow down until a next one-step capture trajectory is found. If the next contact is not one-step capturable, the robot stops walking and balances in place. Otherwise, walking continues toward the next pair of footsteps. 

A limitation of this walking pattern generator is that double-support phases do not include a continuous CoP transfer from one contact to the next. The one exception to this is when both contacts are coplanar, in which case we can use their convex hull as halfspace representation $(\bfH, \bfp)$ in Equation~\eqref{eq:ineq-omegai} and thus apply the line-segment CoP strategy from Section~\ref{sec:tv-cop}. In practice, we let CoP discontinuities occur in the walking pattern during double support and rely on the underlying robot stabilizer to handle them (see Section~\ref{sec:dyn}).

\section{Simulations}
\label{sec:simus}

We validated this walking pattern generator in two successive implementations. First, we evaluate the kinematics of whole-body tracking of the VHIP reference in a prototyping environment. Then, we implement our method as a full-fledged C++ controller and evaluate its performance compared to the state of the art in two dynamic simulators.

\subsection{Kinematics}
\label{sec:kin}

\begin{figure*}[t]
    \centering
    \subfloat[Elliptic staircase]{
        \includegraphics[height=4.8cm]{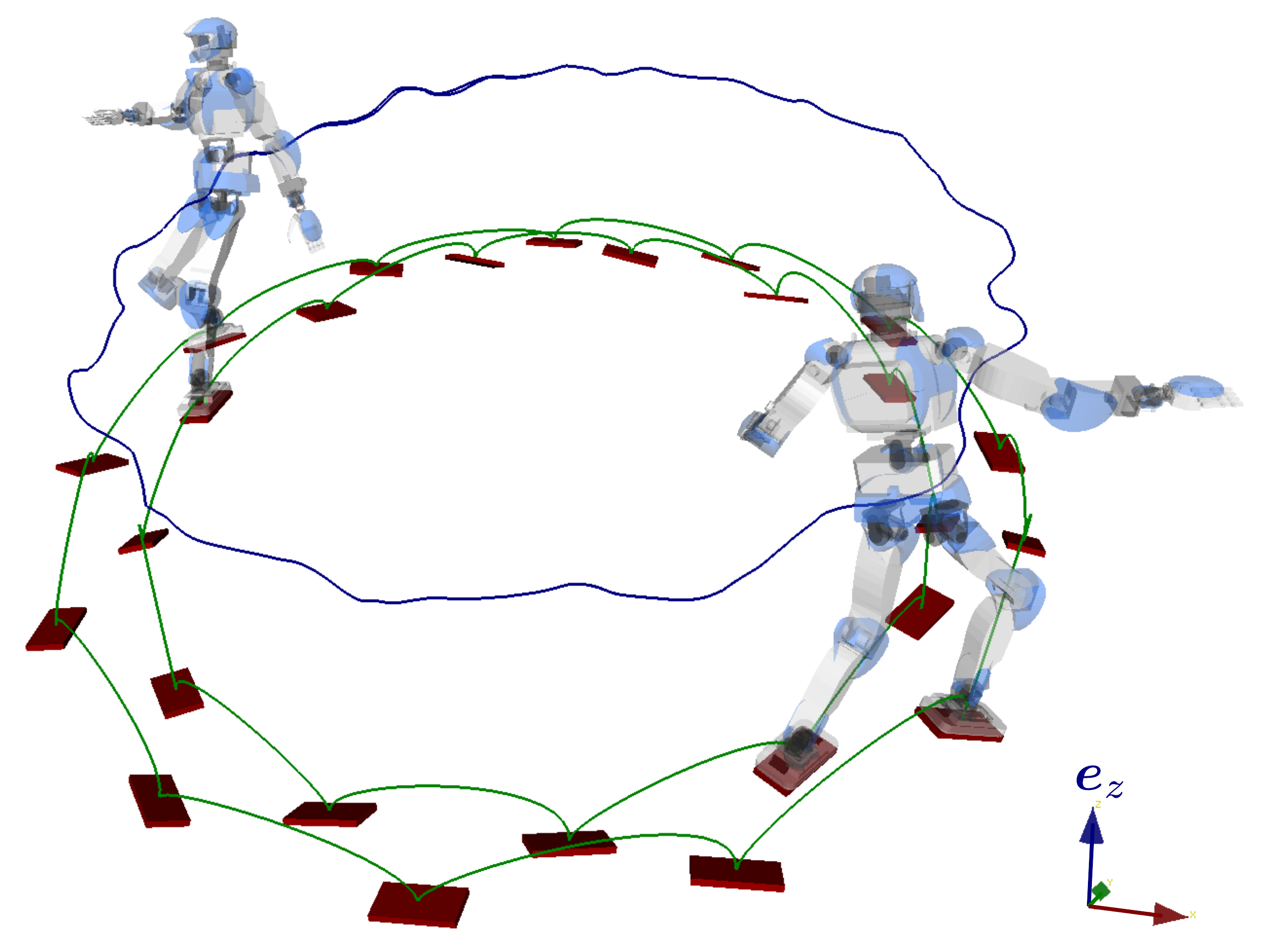}
        \label{fig:sim-elliptic}}
    \subfloat[Regular staircase]{
        \includegraphics[height=4.8cm]{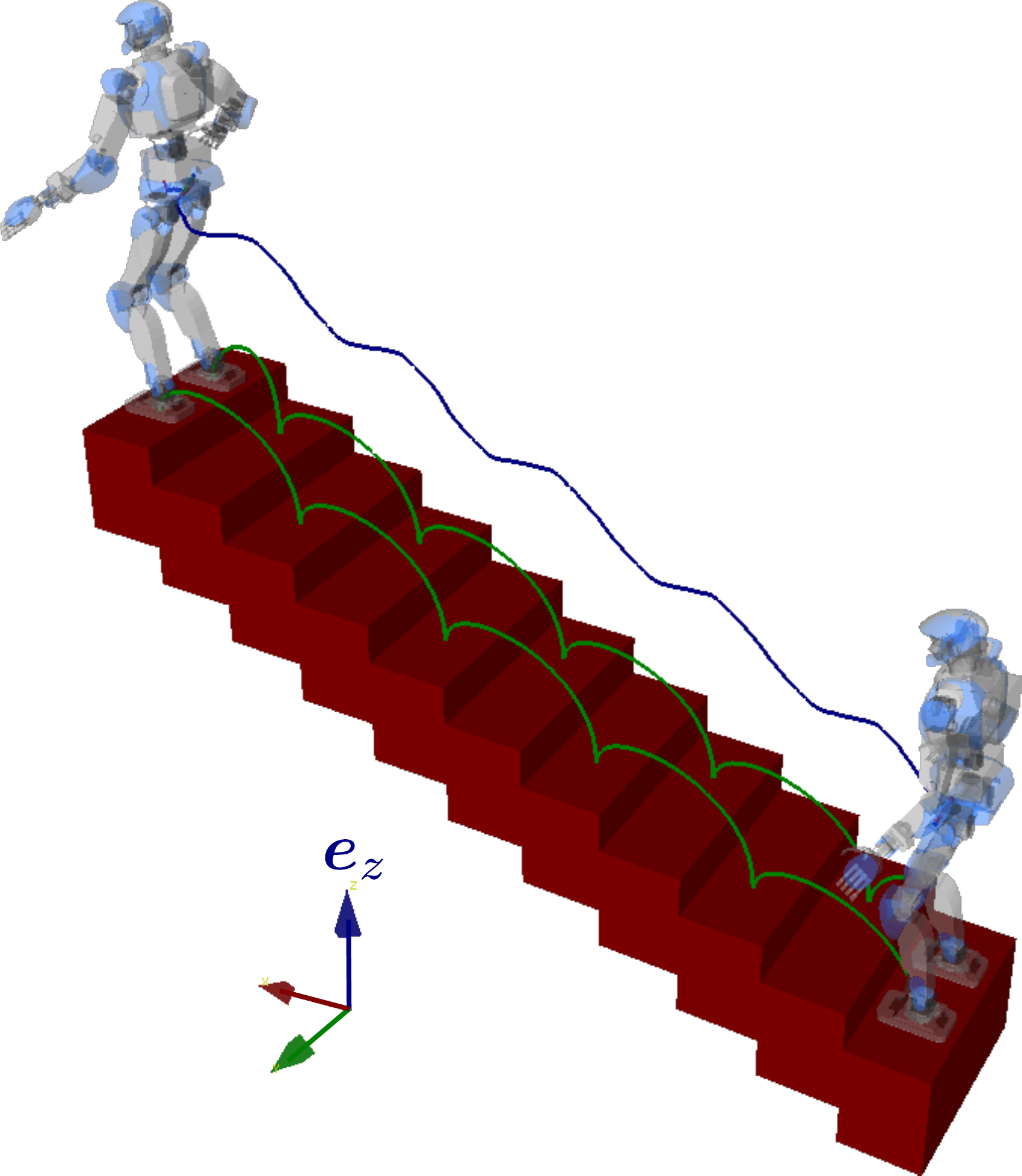}
        \label{fig:sim-regular}}
    \hspace{0.3cm}
    \subfloat[Aircraft factory]{
        \includegraphics[height=4.8cm]{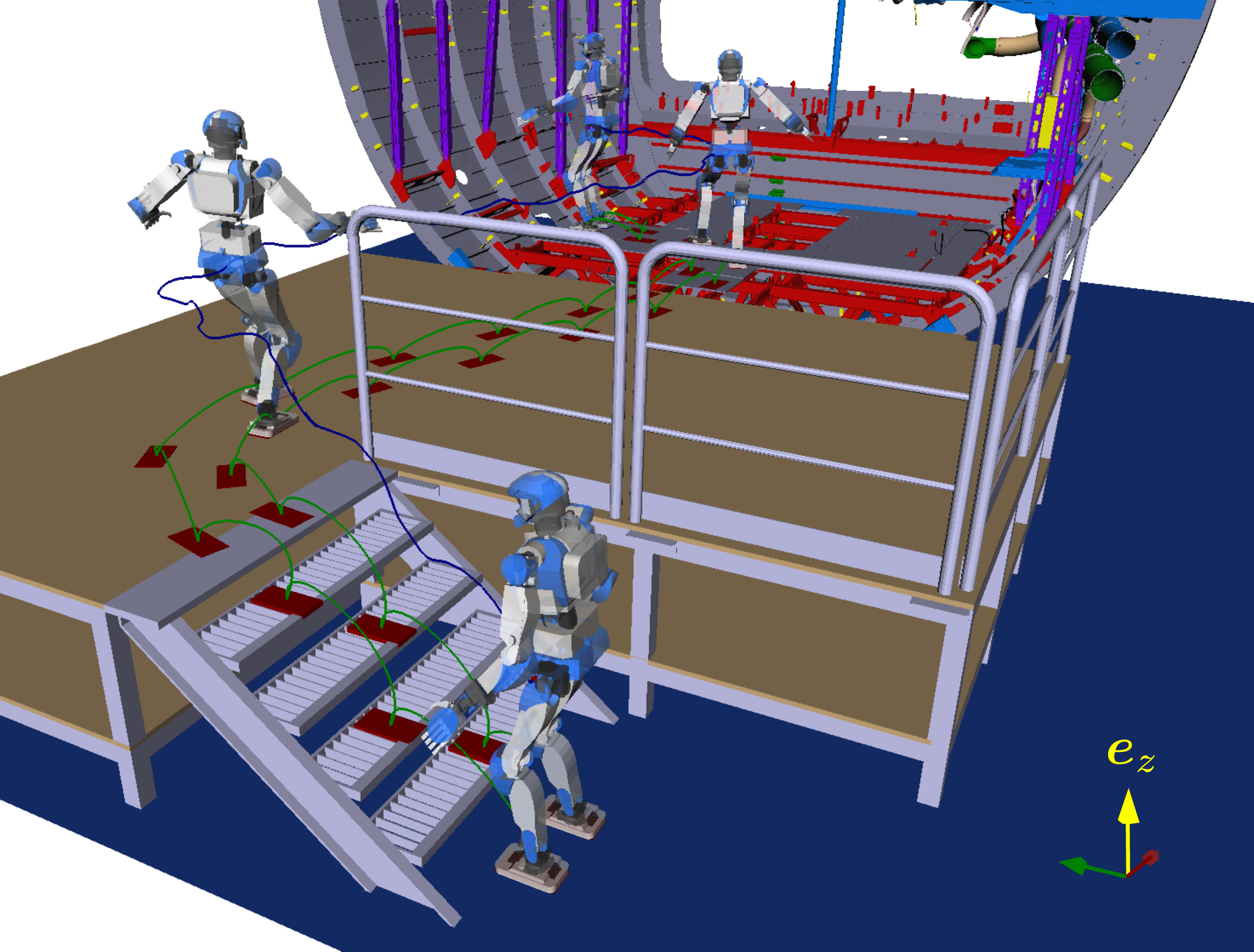}
        \label{fig:sim-aircraft}}
    \caption{
        \textbf{Walking pattern generation:} whole-body tracking in three scenarios. The elliptic staircase with randomly-tilted footsteps (a) tests the ability to walk over rough terrains, \emph{i.e.} to adapt to both 3D translation and 3D orientation variations between contacts. The regular staircase (b) has 15-cm high steps; it assesses the behavior of the solution when contacts are close to each other and collision avoidance becomes more stringent. Finally, the aircraft scenario (c) provides a real-life use case where the environment combines flat floors, staircases with 18.5-cm high steps and uneven-ground areas (inside the fuselage). In all three figures, blue and green trajectories respectively correspond to center-of-mass and swing-foot trajectories.}
    \vspace{1.25ex}
    \label{fig:sim}
\end{figure*}

We implemented our walking pattern generation method in the \emph{pymanoid}\footnote{
    \url{https://github.com/stephane-caron/pymanoid}
} prototyping environment. Whole-body inverse kinematics is carried out using a standard quadratic-programming formulation (see \emph{e.g.}~\cite[Section~1]{escande2014ijrr} for a survey). Tracking of the inverted pendulum and swing foot trajectories is realized by the following set of tasks:

\begin{table}[h]
    \centering
    \begin{tabular}{lll}
    \textbf{Task group} & \textbf{Task} & \textbf{Weight} \\
    \hline
    Foot tracking & Support foot & $1$ \\
    Foot tracking & Swing foot & $10^{-3}$ \\
    VHIP tracking & Center of mass & $1 \times 10^{-2}$ \\
    VHIP tracking & Min. ang. mom. variations & $1 \times 10^{-4}$ \\
    Regularization & Keep upright chest & $1 \times 10^{-4}$ \\
    Regularization & Min. shoulder extension & $1 \times 10^{-5}$ \\
    Regularization & Min. upper-body velocity & $5 \times 10^{-6}$ \\
    Regularization & Reference upright posture & $1 \times 10^{-6}$ \\
    \hline
    \end{tabular}
\end{table}

We considered the three scenarios depicted in Figure~\ref{fig:sim}. The first one, an elliptic staircase with randomly-tilted footsteps~(Figure~\ref{fig:sim-elliptic}), tests the ability to adapt to general uneven terrains. It illustrates the main advantage of the capture problem formulation: it provides real-time model predictive control with constraint saturation over rough terrains. Existing methods either enforce constraints on regular terrains such as floors or stairs~\cite{wieber2006humanoids, brasseur2015humanoids}, or do not take constraint saturation into account~\cite{ hopkins2014humanoids, englsberger2015tro}, or compute costly nonconvex optimizations offline~\cite{dai2014humanoids, carpentier2016icra} (see Section~\ref{sec:discussion} for a complete discussion).

We also consider a regular staircase with $15$-cm high steps (Figure~\ref{fig:sim-regular}), and the real-life scenario provided by Airbus Group depicted in Figure~\ref{fig:sim-aircraft}. It consists of a 1:1 scale model of an A350 aircraft under construction in a factory environment. To reach its desired workspace configuration, the humanoid has to walk up an industrial-grade staircase (step height $18.5$~cm, except the last one which is $14.5$~cm), then across a flat floor area and finally inside the fuselage where the ground consists of temporary wooden slabs. All three walking patterns are depicted in Figure~\ref{fig:sim} and in the accompanying video.

We use the same set of parameters on all scenarios. For capture problems, we choose $n = 10$ discretization steps with a partition $s_i = i / n$ and set $\hf = 0.8$~m, which is a suitable CoM height for HRP-4 at rest with an extended leg. The external optimization for one-step problems samples $n_\alpha = 5$ values of $\alpha$ per feasibility interval, which is sufficient to find solutions that satisfy the inequality constraint~\eqref{eq:tc-ineq} on contact switching time. For the VHIP stiffness feasibility, we set $\lambdamin = 0.1 g$ and $\lambdamax = 2 g$.

At each control cycle, the capture problem solver is called on both zero-step and one-step capture problems. On a consumer laptop computer, zero-step problems were solved in $0.38 \pm 0.13$~ms while one-step ones were solved in $2.4 \pm 1.1$~ms (average and standard deviations over 10,000 control cycles). Note that these computation times reflect both calls to the C++ solver (Section~\ref{sec:optim}) and the external optimization over $\alpha$ implemented in the prototyping environment in Python.

We release the code of this prototype as open source software.\footnote{
    \url{https://github.com/stephane-caron/capture-walking-pg}
}

%

\subsection{Dynamics}
\label{sec:dyn}

To test our method in dynamic simulations and compare it to the existing, we extend the C++ LIP-based stair climbing controller from~\cite{caron2018hal}. This controller consists of two main components: a LIP-based pattern generator by model predictive control (\emph{i.e.} the pattern is produced online in receding horizon) and a stabilizer based on whole-body admittance control. We implement our solution as a second pattern generator in this controller, using the same stabilizer in both cases for comparison.

The planar CoM constraint of the LIP is applicable to small step heights (\emph{e.g.} Kajita \emph{et al.} applied it to 10-cm high steps in~\cite{kajita2003icra}) but can become problematic for higher steps. From experiments on the HRP-4 robot,~\cite{caron2018hal} rather chose to decompose the constraint in two horizontal plane segments, introducing the vertical height variation at toe liftoff. These segments could be further decomposed for better adaption to terrain, but as of today there is no known algorithm for this. Using a VHIP-based pattern generator addresses the question by removing the specification of plane segments altogether.

Thanks to the mc\_rtc control framework,\footnote{
    Developed in the course of the COMANOID project, this control framework is available at \url{https://gite.lirmm.fr/multi-contact/mc\_rtc} upon request and will be released soon.
} we validated the feasibility of our method in two different dynamic simulators: 
\begin{itemize}
    \item \textbf{Choreonoid}\footnote{
        \url{http://choreonoid.org/en/}
        }, the most reliable one at our disposal, which implements the projected Gauss-Seidel method~\cite{nakaoka2007rsj} and runs in real time.
    \item \textbf{V-REP}\footnote{
        \url{http://coppeliarobotics.com/}
    } with the Newton dynamics engine. For a realistic behavior, we set simulations to a time step of 5~ms with 1~ppf, in which case they were not real time.  
\end{itemize}
The same set of controller parameters, and thus the same walking patterns, were applied in both simulators.

Figures~\ref{fig:choreonoid} and~\ref{fig:vrep} compare LIP-based and VHIP-based controller simulations at different time instants (full simulation recordings are shown in the accompanying video). The main difference between the two lies in knee flexion-extension (Figure~\ref{fig:knees}): thanks to its reference height going up after the first step, the VHIP pattern brings the robot to knee extension while the LIP one bends down then raises again (Figure~\ref{fig:com-height}). In the horizontal plane, DCM-ZMP tracking looks similar to a regular capture-point walking controller (Figure~\ref{fig:lateral-vhip}) since the variations of $\omega(t)$ required to lift the CoM up are very small: in this stair climbing scenario, $\omega$ ranges from 3.536~Hz to 3.550~Hz (Figure~\ref{fig:omega-variations}), as opposed \emph{e.g.} to the elliptic staircase scenario (Figure~\ref{fig:sim-elliptic}) where $\omega$ ranges from 3.4~Hz to 3.8~Hz.

We release the code of this controller as open source software.\footnote{
    \url{https://github.com/stephane-caron/capture\_walking\_controller}
}

\section{Discussion and future work}
\label{sec:discussion}

While the present study is coming to an end, our understanding of the variable-height inverted pendulum (VHIP) model is, hopefully, only beginning to unfold. What did we understand so far? First, that capturability of the VHIP is characterized by three properties of its two inputs: their feasibility, asymptotic convergence, and the boundedness condition. These properties can be cast into an optimization problem, the \emph{capture problem}, that we can solve in tens of microseconds. These fast solver times allow us to add an external optimization solving for other nonlinearities such as step timings.

\begin{figure}[t]
    \centering
    \includegraphics[width=\columnwidth]{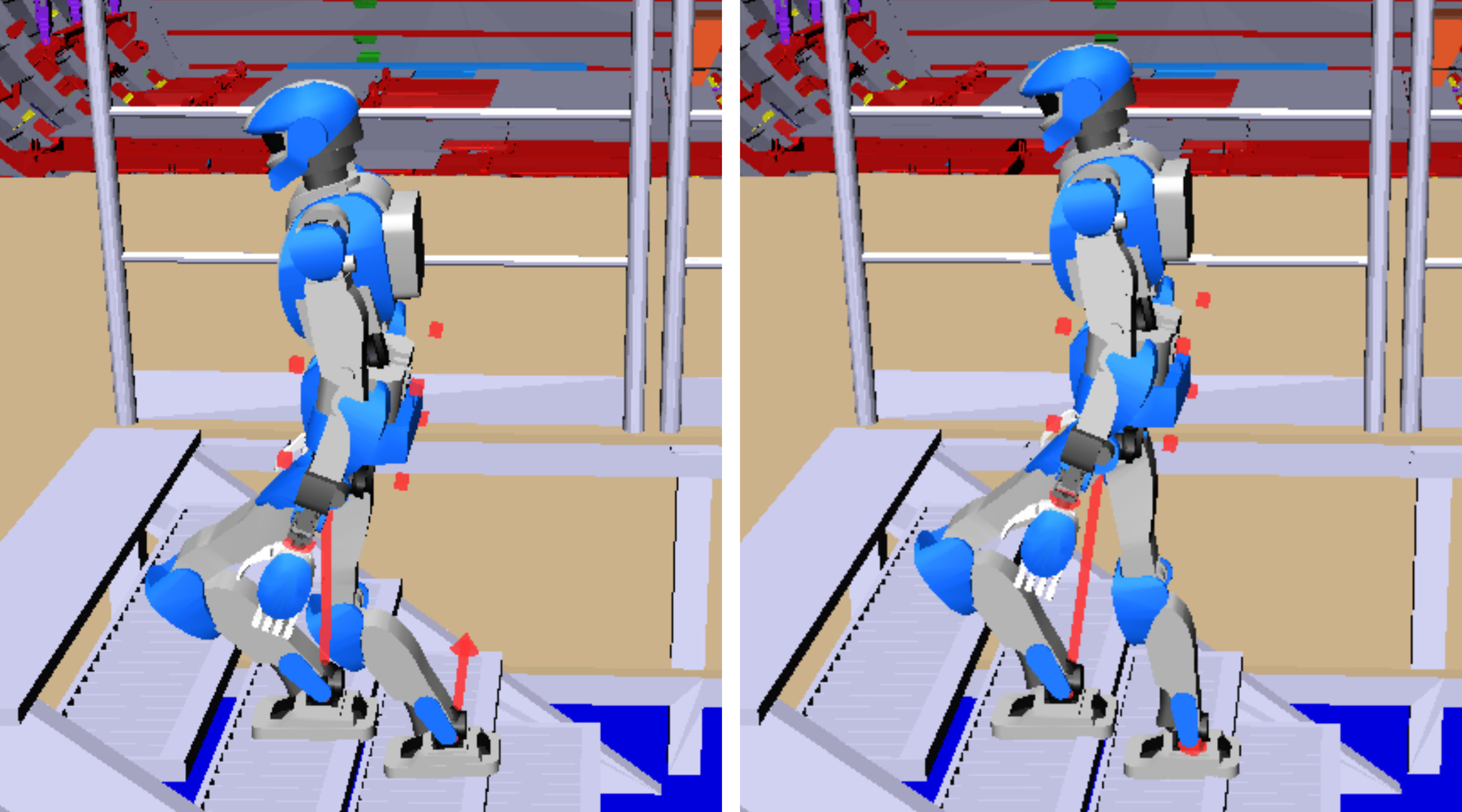}
    \caption{
        \textbf{Dynamic simulations in Choreonoid:} snapshots at the maximum knee flexion time instant. \textit{Left:}~LIP-based pattern generation. \textit{Right:}~VHIP-based pattern generation. The former causes HRP-4 to bend its knees more, increasing the risk of shank collision with the staircase. We can also observe the more abrupt CoP transfer of the latter, which is specific to our method.
    }
    \label{fig:choreonoid}
\end{figure}

Our overall discussion draws numerous connections with the existing literature. To start with, the exponential dichotomy of the time-varying inverted pendulum was proposed in 2004 by J. Hauser \emph{et al.}~\cite{hauser2004cdc} to address a question of motorcycle balance. Its application to the linear inverted pendulum can be found in the motion generation framework of the Honda ASIMO humanoid~\cite{takenaka2009iros}. The LIP itself has been the focus of a large part of the recent literature, in the wake of major works such as~\cite{kajita2003icra, wieber2006humanoids, takenaka2009iros, koolen2012ijrr}. Solutions allowing CoM height variations have therefore been the exception more than the rule. They can be grouped into two categories: pre-planning of CoM height functions, and 2D sagittal capturability.

When CoM height variations $c^z(t)$ are pre-planned~\cite{terada2007humanoids, herdt2012humanoids, hopkins2014humanoids, kamioka2015iros}, the remainder of the system can be controlled in the 2D horizontal plane similarly to the LIP, yet with a time-variant rather than time-invariant equation of motion. Two successful LIP solutions have been generalized following this idea: linear model predictive control~\cite{wieber2006humanoids} was extended into~\cite{herdt2012humanoids}, and the time-invariant divergent component of motion~\cite{englsberger2015tro} was extended into a time-variant counterpart~\cite{hopkins2014humanoids}.\footnote{
    Both~\cite{herdt2012humanoids} and~\cite{hopkins2014humanoids} use polynomial CoM height functions, which makes it easy to satisfy boundary conditions but yields non-integrable dynamics. Terada and Kuniyoshi~\cite{terada2007humanoids} proposed a symmetric alternative where the system becomes integrable, yet where enforcing boundary conditions is a nonlinear root finding problem.
} Interestingly, in~\cite{hopkins2014humanoids} Hopkins \emph{et al.} use the Riccati equation~\eqref{eq:riccati-omega} to compute $\omega(t)$ from $c^z(t)$, while in the present study we compute $\bfc(t)$ from $\omega(t)$. More generally, our strategy can be seen as mapping the whole problem onto the damping $\omega$ and solving for $\omega(t)$, while the underlying strategy behind those other approaches is to fix $\omega$ and map the remainder of the problem onto $\bfc^{xy}$. Another noteworthy example of the latter can be found in the linearized MPC proposed by Brasseur \emph{et al.}~\cite{brasseur2015humanoids}, where $\omega$ variations are this time abstracted using polyhedral bounds rather than a pre-planned height trajectory.

In this regard, our present study is more akin to works on capturability proposed for the 2D nonlinear inverted pendulum~\cite{pratt2007icra, ramos2015humanoids, koolen2016humanoids}. All of them share a design choice dating back to the seminal work of Pratt and Drakunov~\cite{pratt2007icra}: they interpolate CoM trajectories in a 2D vertical plane with a fixed center of pressure (CoP). The key result of~\cite{pratt2007icra} is the conservation of the ``orbital energy'' of a CoM path, a variational principle that we can now interpret as a two-dimensional formulation of the boundedness condition. This principle was later translated into a predictive controller in an equally inspirational study by Koolen \emph{et al.}~\cite{koolen2016humanoids}. Ramos and Hauser~\cite{ramos2015humanoids} also noticed that the capture point, interpreted as \emph{point where to step}, was a function of the CoM path. They proposed a single-shooting method to compute what we would now call 2D capture trajectories.

\begin{figure}[t]
    \centering
    \includegraphics[width=\columnwidth]{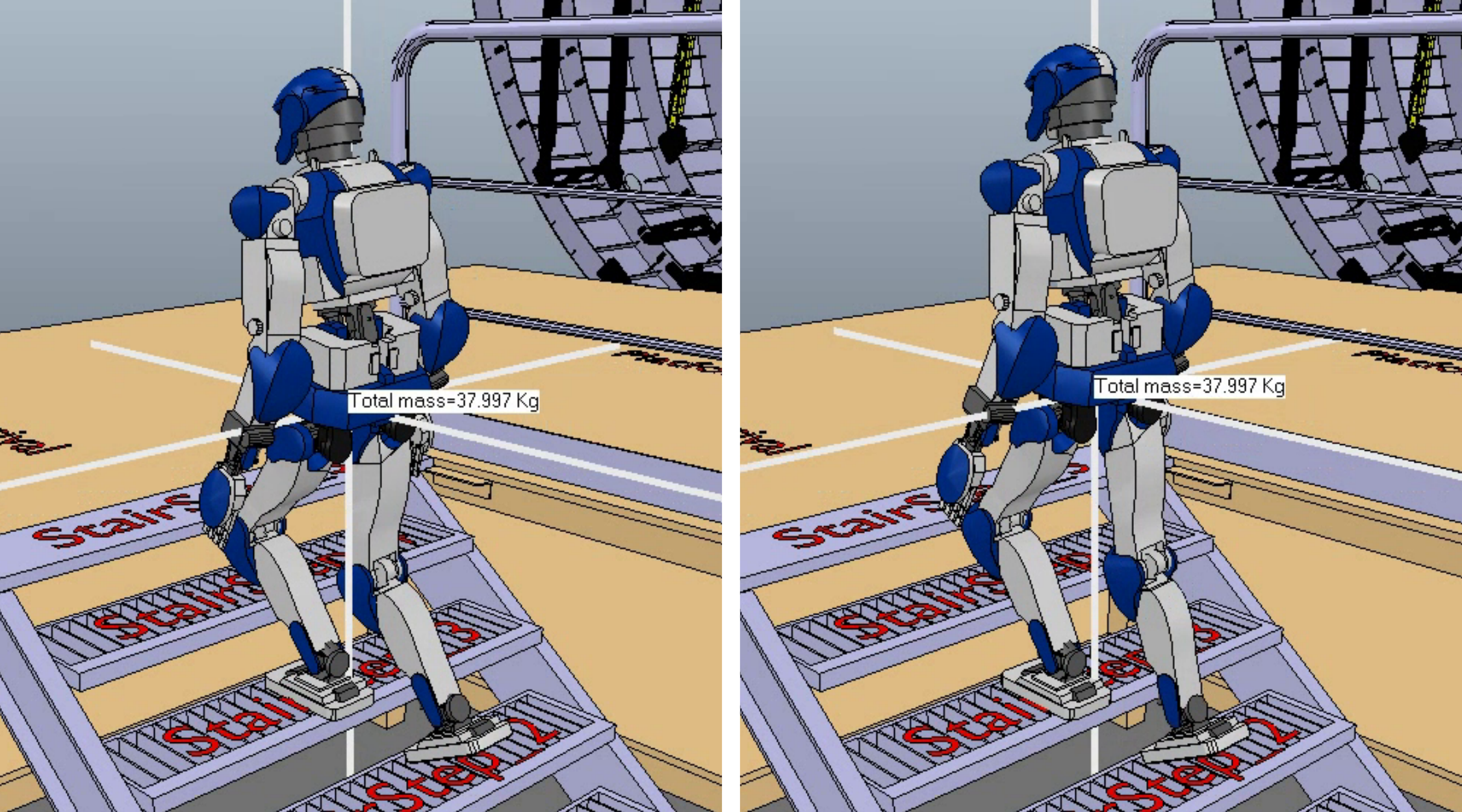}
    \caption{
        \textbf{Dynamic simulations in V-REP:} snapshots at the toe liftoff time instant. Simulations use the Newton dynamics engine with a time step of 5~ms and one pass per frame. \textit{Left:}~LIP-based pattern generation. \textit{Right:}~VHIP-based pattern generation. The former causes HRP-4 to bend its knees more, raising the risk of shank collision with the staircase. 
    }
    \label{fig:vrep}
\end{figure}

All of these works hinted at key features of 3D capture trajectories, but applied only to two-dimensional CoM motions in vertical planes. The key to lift this restriction is the 3D boundedness condition, which was first formulated in the case of the LIP by Lanari \emph{et al.}~\cite{lanari2014humanoids} and applied to model predictive control of the LIP in~\cite{scianca2016humanoids}. This condition can be more generally applied to different asymptotic behaviors, including but not restricted to stopping. For instance, infinite stepping is another option~\cite{lanari2015humanoids}. The exploration of these more general asymptotic behaviors is an open question.

Another important choice of the present study is to focus on zero- and one-step capture trajectories. Walking controllers based on one-step capturability have been proposed for both even~\cite{sugihara2009tro, khadiv2016humanoids, sugihara2017iros} and uneven terrains~\cite{caron2016humanoids, caron2017iros}. The latter follow a single line of work leading to the present study:~\cite{caron2016humanoids} finds rough-terrain (even multi-contact) solutions but tends to produce conservatively slow trajectories;~\cite{caron2017iros} discovers dynamic walking patterns, but suffers from numerical instabilities when used in a closed control loop. In our understanding, these instabilities are due to the \emph{direct} transcription of centroidal dynamics, which has proved successful for planning~\cite{dai2014humanoids, carpentier2016icra} but where closed-loop controllers suffer from frequent switches between local optima~\cite{caron2017iros}. The optimization of capture problems provides an alternative transcription for which we do not observe this numerical sensitivity. 

Finally, the last key choice of the present study is the change of variable from $t$ to $s$. This choice is one possible generalization of the seminal idea by Pratt and Drakunov~\cite{pratt2007icra} to make the CoM height a function $c^z(c^x)$ of the 2D CoM abscissa: as it turns out, $c^x(t)$ and $s(t)$ are proportional in their 2D setting~\cite{caron2018icra}, although that is not the case any more in 3D. An alternative 3D generalization is to solve for the remaining lateral motion $c^y(t)$ after the sagittal motion has been computed by the 2D method~\cite{kajita2017humanoids}. Both cases, as noted in~\cite{kajita2017humanoids} and in the present study, bear a close connection with time-optimal path parameterization (see \emph{e.g.}~\cite{pham2018tro} for a survey). Future works may explore this connection, and perhaps bring to light computational complexity results regarding the best case performance one can hope for this kind of problems.

\begin{figure}[t]
    \centering
    \includegraphics[width=\columnwidth]{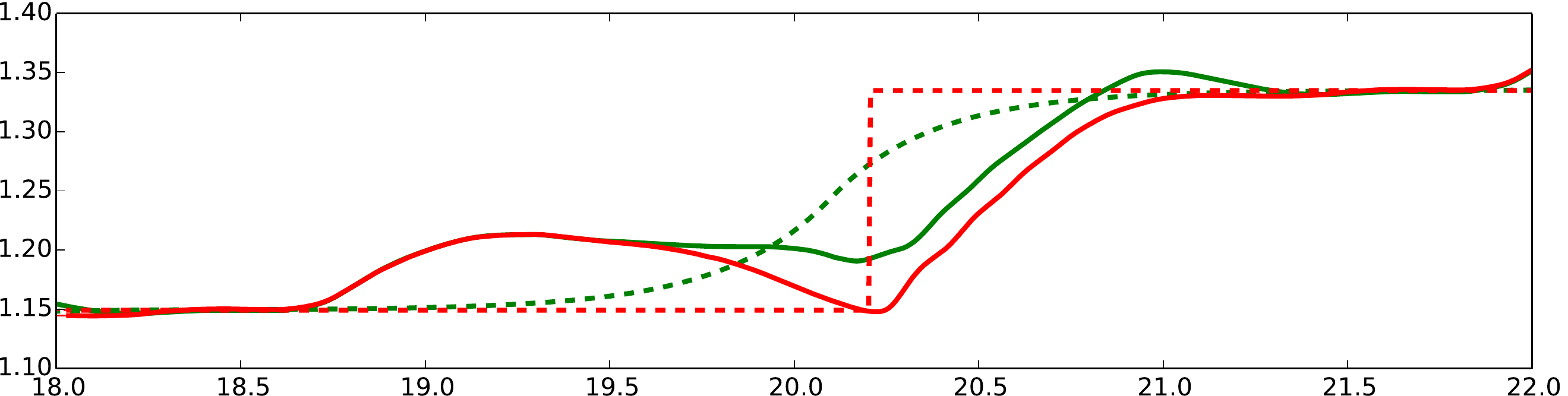}
    \caption{
        \textbf{CoM height variations} for LIP-based (red) and VHIP-based (green) walking patterns on a representative step. Dashed lines: references from walking patterns. Solid lines: estimates by the controller's CoM observer.Tracking cannot be perfect as walking patterns do not take into account swing foot motions (18.5--20.0~s) and maximum leg extension during double support (20.0--20.2~s) that limit the CoM height kinematically.
    }
    \label{fig:com-height}
\end{figure}

\begin{figure}[t]
    \centering
    \includegraphics[width=\columnwidth]{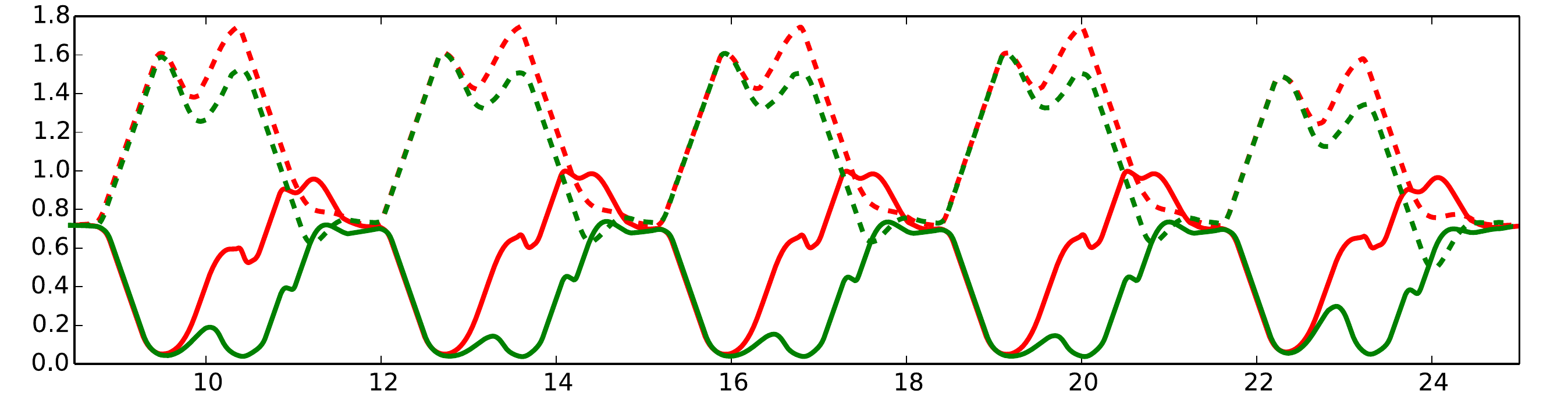}
    \caption{
        \textbf{Knee angle variations} between LIP-based (red) and VHIP-based (green) whole-body control. Solid lines correspond to the right knee and dashed ones to the left knee. The right knee stays extended longer in the VHIP pattern, curtailing a peek in left-knee flexion observed in the LIP pattern.
    }
    \label{fig:knees}
\end{figure}

\begin{figure}[t]
    \centering
    \includegraphics[width=\columnwidth]{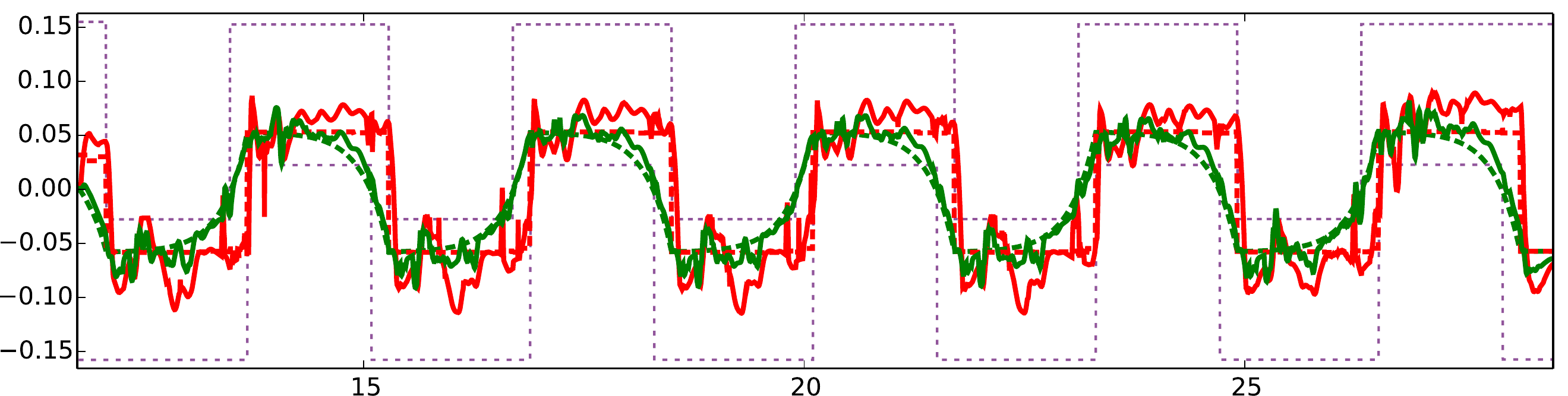}
    \caption{
        \textbf{Lateral DCM-ZMP tracking for the VHIP} over the complete stair climbing motion. Green: DCM, red: ZMP, dashed: reference, solid: observed. The DCM here is the position $\bfc(t) + \bfcd(t) / \omega(t)$. References look like a regular capture-point pattern generator as the variations of $\omega(t)$ required to lift the robot up by 18.5~cm are too small to affect the robot's lateral dynamics. The same behavior is observed on sagittal dynamics.
    }
    \label{fig:lateral-vhip}
\end{figure}

\begin{figure}[t]
    \centering
    \includegraphics[width=\columnwidth]{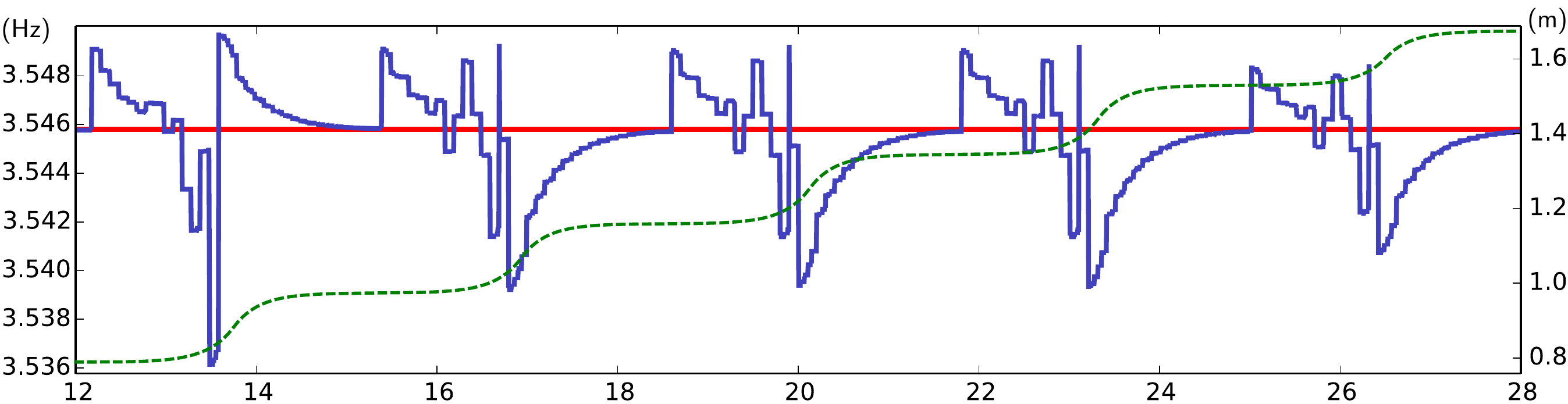}
    \caption{
        \textbf{Time-varying frequency $\omega(t)$} of the VHIP (blue) compared to its LIP counterpart (red). The VHIP pattern CoM height (Figure~\ref{fig:com-height}) is shown for reference in green. The frequency raises when the DCM accelerates upward and falls when the DCM accelerates downward.
    }
    \label{fig:omega-variations}
\end{figure}

The ability to solve capture problems in tens of microseconds opens new perspectives for motion planning and control. For instance, planners can use this tool for fast evaluation of contact reachability, while stepping stabilizers can evaluate several contact candidates in parallel in reaction to \emph{e.g.} external pushes. These extensions are open to future works.

\section*{Acknowledgment}

The author would like to thank Boris van Hofslot, Alain Micaelli and Vincent Thomasset for their feedback on preliminary versions of this paper.
Our thanks also go to Pierre Gergondet for developing and helping us with the mc\_rtc framework.
This research was supported in part by H2020 EU project COMANOID \url{http://www.comanoid.eu/}, RIA No 645097, and the CNRS--AIST--AIRBUS Group Joint Research Program.

\bibliographystyle{IEEEtran}
\bibliography{refs}

\begin{thebibliography}{10}
\providecommand{\url}[1]{#1}
\csname url@samestyle\endcsname
\providecommand{\newblock}{\relax}
\providecommand{\bibinfo}[2]{#2}
\providecommand{\BIBentrySTDinterwordspacing}{\spaceskip=0pt\relax}
\providecommand{\BIBentryALTinterwordstretchfactor}{4}
\providecommand{\BIBentryALTinterwordspacing}{\spaceskip=\fontdimen2\font plus
\BIBentryALTinterwordstretchfactor\fontdimen3\font minus
  \fontdimen4\font\relax}
\providecommand{\BIBforeignlanguage}[2]{{%
\expandafter\ifx\csname l@#1\endcsname\relax
\typeout{** WARNING: IEEEtran.bst: No hyphenation pattern has been}%
\typeout{** loaded for the language `#1'. Using the pattern for}%
\typeout{** the default language instead.}%
\else
\language=\csname l@#1\endcsname
\fi
#2}}
\providecommand{\BIBdecl}{\relax}
\BIBdecl

\bibitem{pratt2006humanoids}
J.~Pratt, J.~Carff, S.~Drakunov, and A.~Goswami, ``Capture point: A step toward
  humanoid push recovery,'' in \emph{IEEE-RAS International Conference on
  Humanoid Robots}, 2006, pp. 200--207.

\bibitem{koolen2012ijrr}
T.~Koolen, T.~de~Boer, J.~Rebula, A.~Goswami, and J.~Pratt,
  ``Capturability-based analysis and control of legged locomotion, part 1:
  Theory and application to three simple gait models,'' \emph{The International
  Journal of Robotics Research}, vol.~31, no.~9, pp. 1094--1113, 2012.

\bibitem{sugihara2009icra}
T.~Sugihara, ``Standing stabilizability and stepping maneuver in planar
  bipedalism based on the best com-zmp regulator,'' in \emph{IEEE International
  Conference on Robotics and Automation}, 2009, pp. 1966--1971.

\bibitem{takenaka2009iros}
T.~Takenaka, T.~Matsumoto, and T.~Yoshiike, ``Real time motion generation and
  control for biped robot-1st report: Walking gait pattern generation,'' in
  \emph{IEEE/RSJ International Conference on Intelligent Robots and Systems},
  2009, pp. 1084--1091.

\bibitem{morisawa2012humanoids}
M.~Morisawa, S.~Kajita, F.~Kanehiro, K.~Kaneko, K.~Miura, and K.~Yokoi,
  ``Balance control based on capture point error compensation for biped walking
  on uneven terrain,'' in \emph{IEEE-RAS International Conference on Humanoid
  Robots}, 2012, pp. 734--740.

\bibitem{englsberger2015tro}
J.~Englsberger, C.~Ott, and A.~Albu-Sch{\"a}ffer, ``Three-dimensional bipedal
  walking control based on divergent component of motion,'' \emph{IEEE
  Transactions on Robotics}, vol.~31, no.~2, pp. 355--368, 2015.

\bibitem{griffin2017iros}
R.~J. Griffin, G.~Wiedebach, S.~Bertrand, A.~Leonessa, and J.~Pratt, ``Walking
  stabilization using step timing and location adjustment on the humanoid
  robot, atlas,'' in \emph{IEEE/RSJ International Conference on Intelligent
  Robots and Systems}, 2017.

\bibitem{pratt2007icra}
J.~E. Pratt and S.~V. Drakunov, ``Derivation and application of a conserved
  orbital energy for the inverted pendulum bipedal walking model,'' in
  \emph{IEEE International Conference on Robotics and Automation}, 2007, pp.
  4653--4660.

\bibitem{ramos2015humanoids}
O.~E. Ramos and K.~Hauser, ``Generalizations of the capture point to nonlinear
  center of mass paths and uneven terrain,'' in \emph{IEEE-RAS International
  Conference on Humanoid Robots}, 2015, pp. 851--858.

\bibitem{koolen2016humanoids}
T.~Koolen, M.~Posa, and R.~Tedrake, ``Balance control using center of mass
  height variation: Limitations imposed by unilateral contact,'' in
  \emph{IEEE-RAS International Conference on Humanoid Robots}, 2016.

\bibitem{caron2018icra}
S.~Caron and B.~Mallein, ``Balance control using both zmp and com height
  variations: A convex boundedness approach,'' in \emph{IEEE International
  Conference on Robotics and Automation}, May 2018.

\bibitem{kajita2001iros}
S.~Kajita, F.~Kanehiro, K.~Kaneko, K.~Yokoi, and H.~Hirukawa, ``The 3d linear
  inverted pendulum mode: A simple modeling for a biped walking pattern
  generation,'' in \emph{{IEEE}/{RSJ} International Conference on Intelligent
  Robots and Systems}, vol.~1, 2001, pp. 239--246.

\bibitem{caron2016tro}
S.~Caron, Q.-C. Pham, and Y.~Nakamura, ``Zmp support areas for multi-contact
  mobility under frictional constraints,'' \emph{IEEE Transactions on
  Robotics}, vol.~33, no.~1, pp. 67--80, Feb. 2017.

\bibitem{morisawa2005icra}
M.~Morisawa, S.~Kajita, K.~Kaneko, K.~Harada, F.~Kanehiro, K.~Fujiwara, and
  H.~Hirukawa, ``Pattern generation of biped walking constrained on parametric
  surface,'' in \emph{{IEEE} International Conference on Robotics and
  Automation}, 2005, pp. 2405--2410.

\bibitem{zhao2012humanoids}
Y.~Zhao and L.~Sentis, ``A three dimensional foot placement planner for
  locomotion in very rough terrains,'' in \emph{Humanoid Robots (Humanoids),
  2012 12th {IEEE}-{RAS} International Conference on}, 2012, pp. 726--733.

\bibitem{hopkins2014humanoids}
M.~A. Hopkins, D.~W. Hong, and A.~Leonessa, ``Humanoid locomotion on uneven
  terrain using the time-varying divergent component of motion,'' in
  \emph{IEEE-RAS International Conference on Humanoid Robots}, 2014, pp.
  266--272.

\bibitem{kamioka2015iros}
T.~Kamioka, T.~Watabe, M.~Kanazawa, H.~Kaneko, and T.~Yoshiike, ``Dynamic gait
  transition between bipedal and quadrupedal locomotion,'' in
  \emph{{IEEE}/{RSJ} International Conf. on Int. Robots and Systems}, 2015.

\bibitem{caron2017iros}
S.~Caron and A.~Kheddar, ``Dynamic walking over rough terrains by nonlinear
  predictive control of the floating-base inverted pendulum,'' in
  \emph{IEEE/RSJ International Conf. on Intelligent Robots and Systems}, 2017.

\bibitem{hauser2004cdc}
J.~Hauser, A.~Saccon, and R.~Frezza, ``Achievable motorcycle trajectories,'' in
  \emph{IEEE Conf. on Dec. and Control}, vol.~4, 2004, pp. 3944--3949.

\bibitem{coppel1971sdeds}
W.~A. Coppel, ``Dichotomies and stability theory,'' in \emph{Symposium on
  Differential Equations and Dynamical Systems}, 1971, pp. 160--162.

\bibitem{lanari2014humanoids}
L.~Lanari, S.~Hutchinson, and L.~Marchionni, ``Boundedness issues in planning
  of locomotion trajectories for biped robots,'' in \emph{IEEE-RAS
  International Conference on Humanoid Robots}, 2014, pp. 951--958.

\bibitem{stephens2007humanoids}
B.~Stephens, ``Humanoid push recovery,'' in \emph{IEEE-RAS International
  Conference on Humanoid Robots}, 2007, pp. 589--595.

\bibitem{yamamoto2016ras}
K.~Yamamoto, ``Control strategy switching for humanoid robots based on maximal
  output admissible set,'' \emph{Robotics and Autonomous Systems}, vol.~81, pp.
  17--32, Jul. 2016.

\bibitem{samy2017humanoids}
V.~Samy, S.~Caron, K.~Bouyarmane, and A.~Kheddar, ``Post-impact adaptive
  compliance for humanoid falls using predictive control of a reduced model,''
  in \emph{IEEE-RAS Int. Conf. on Humanoid Robots}, 2017.

\bibitem{delprete2017hal}
A.~Del~Prete, S.~Tonneau, and N.~Mansard, ``{Zero Step Capturability for Legged
  Robots in Multi Contact},'' Dec. 2017, submitted.

\bibitem{pham2018tro}
H.~Pham and Q.-C. Pham, ``A new approach to time-optimal path parameterization
  based on reachability analysis,'' \emph{IEEE Transactions on Robotics},
  vol.~34, no.~3, pp. 645--659, Jun. 2018.

\bibitem{nocedal:book:2006}
J.~Nocedal and S.~J. Wright, \emph{Numerical Optimization}, 2nd~ed.\hskip 1em
  plus 0.5em minus 0.4em\relax Springer-Verlag New York, 2006.

\bibitem{golub:book:1996}
G.~H. Golub and C.~F. Van~Loan, \emph{Matrix Computations (3rd Ed.)}.\hskip 1em
  plus 0.5em minus 0.4em\relax Baltimore, MD, USA: Johns Hopkins University
  Press, 1996.

\bibitem{bjorck:book:1996}
{\AA}.~Bj{\"o}rck, \emph{Numerical Methods for Least Squares Problems}.\hskip
  1em plus 0.5em minus 0.4em\relax Society for Industrial and Applied
  Mathematics, 1996.

\bibitem{wachter2006springer}
A.~W{\"a}chter and L.~T. Biegler, ``On the implementation of an interior-point
  filter line-search algorithm for large-scale nonlinear programming,''
  \emph{Mathematical Programming}, vol. 106, no.~1, pp. 25--57, Mar. 2006.

\bibitem{gill:tech:1986}
P.~E.~E. Gill, S.~J. Hammarling, W.~Murray, M.~A. Saunders, and M.~H. Wright,
  ``User's guide for lssol (version 1.0),'' Standford University, Standord,
  California 94305, Tech. Rep. 86-1, January 1986.

\bibitem{khadiv2016humanoids}
M.~Khadiv, A.~Herzog, S.~A.~A. Moosavian, and L.~Righetti, ``Step timing
  adjustment: A step toward generating robust gaits,'' in \emph{IEEE-RAS
  International Conference on Humanoid Robots}, 2016, pp. 35--42.

\bibitem{sugihara2017iros}
T.~Sugihara and T.~Yamamoto, ``Foot-guided agile control of a biped robot
  through zmp manipulation,'' in \emph{Intelligent Robots and Systems, IEEE/RSJ
  International Conference on}, 2017.

\bibitem{escande2014ijrr}
A.~Escande, N.~Mansard, and P.-B. Wieber, ``Hierarchical quadratic programming:
  Fast online humanoid-robot motion generation,'' \emph{The Int. Journal of
  Robotics Research}, vol.~33, no.~7, pp. 1006--1028, 2014.

\bibitem{wieber2006humanoids}
P.-B. Wieber, ``Trajectory free linear model predictive control for stable
  walking in the presence of strong perturbations,'' in \emph{IEEE-RAS
  International Conference on Humanoid Robots}, 2006, pp. 137--142.

\bibitem{brasseur2015humanoids}
C.~Brasseur, A.~Sherikov, C.~Collette, D.~Dimitrov, and P.-B. Wieber, ``A
  robust linear {MPC} approach to online generation of {3D} biped walking
  motion,'' in \emph{IEEE--RAS Int. Conf. on Humanoid Robots}, 2015.

\bibitem{dai2014humanoids}
H.~Dai, A.~Valenzuela, and R.~Tedrake, ``Whole-body motion planning with
  centroidal dynamics and full kinematics,'' in \emph{IEEE-RAS International
  Conference on Humanoid Robots}, 2014, pp. 295--302.

\bibitem{carpentier2016icra}
J.~Carpentier, S.~Tonneau, M.~Naveau, O.~Stasse, and N.~Mansard, ``A versatile
  and efficient pattern generator for generalized legged locomotion,'' in
  \emph{IEEE International Conference on Robotics and Automation}, 2016, pp.
  3555--3561.

\bibitem{caron2018hal}
\BIBentryALTinterwordspacing
S.~Caron, A.~Kheddar, and O.~Tempier, ``Stair climbing stabilization of the
  {HRP}-4 humanoid robot using whole-body admittance control,'' Sep. 2018,
  submitted. [Online]. Available:
  \url{https://hal.archives-ouvertes.fr/hal-01875387}
\BIBentrySTDinterwordspacing

\bibitem{kajita2003icra}
S.~Kajita, F.~Kanehiro, K.~Kaneko, K.~Fujiwara, K.~Harada, K.~Yokoi, and
  H.~Hirukawa, ``Biped walking pattern generation by using preview control of
  zero-moment point,'' in \emph{IEEE International Conference on Robotics and
  Automation}, vol.~2, 2003, pp. 1620--1626.

\bibitem{nakaoka2007rsj}
S.~Nakaoka, S.~Hattori, F.~Kanehiro, S.~Kajita, and H.~Hirukawa, ``Iterative
  contact force solver for simulating articulated rigid bodies,'' in
  \emph{Annual conference of the Robotics Society of Japan}, 2007, in Japanese.

\bibitem{terada2007humanoids}
K.~Terada and Y.~Kuniyoshi, ``Online gait planning with dynamical
  3d-symmetrization method,'' in \emph{IEEE-RAS International Conference on
  Humanoid Robots}, 2007, pp. 222--227.

\bibitem{herdt2012humanoids}
A.~Herdt, N.~Perrin, and P.-B. Wieber, ``Lmpc based online generation of more
  efficient walking motions,'' in \emph{IEEE-RAS International Conference on
  Humanoid Robots}, 2012, pp. 390--395.

\bibitem{scianca2016humanoids}
N.~Scianca, M.~Cognetti, D.~De~Simone, L.~Lanari, and G.~Oriolo,
  ``Intrinsically stable {MPC} for humanoid gait generation,'' in
  \emph{IEEE-RAS International Conference on Humanoid Robots}, 2016, pp.
  101--108.

\bibitem{lanari2015humanoids}
L.~Lanari and S.~Hutchinson, ``Planning desired center of mass and zero moment
  point trajectories for bipedal locomotion,'' in \emph{IEEE-RAS International
  Conference on Humanoid Robots}, 2015, pp. 637--642.

\bibitem{sugihara2009tro}
T.~Sugihara and Y.~Nakamura, ``Boundary condition relaxation method for
  stepwise pedipulation planning of biped robots,'' \emph{{IEEE} Transactions
  on Robotics}, vol.~25, no.~3, pp. 658--669, 2009.

\bibitem{caron2016humanoids}
S.~Caron and A.~Kheddar, ``Multi-contact walking pattern generation based on
  model preview control of 3d com accelerations,'' in \emph{IEEE-RAS
  International Conference on Humanoid Robots}, 2016.

\bibitem{kajita2017humanoids}
S.~Kajita, M.~Benallegue, R.~Cisneros, T.~Sakaguchi, S.~Nakaoka, M.~Morisawa,
  K.~Kaneko, and F.~Kanehiro, ``Biped walking pattern generation based on
  spatially quantized dynamics,'' in \emph{IEEE-RAS International Conference on
  Humanoid Robots}, 2017, pp. 599--605.

\bibitem{brossette:phd:2016}
S.~Brossette, ``Viable multi-contact posture computation for humanoid robots
  using nonlinear optimization on manifolds,'' Ph.D. dissertation, University
  of Montpellier, 2016.

\bibitem{escande:tech:2018}
\BIBentryALTinterwordspacing
A.~Escande, ``Dedicated optimization for {MPC} with convex boundedness
  constraint,'' CNRS-AIST JRL UMI3218/RL, Tech. Rep., 2018. [Online].
  Available: \url{https://github.com/jrl-umi3218/CaptureProblemSolver/}
\BIBentrySTDinterwordspacing

\end{thebibliography}

\appendices

\section{Mathematical complement}
\label{app:math-background}

In this Appendix, we provide formal proofs for claims made in Section~\ref{sec:analysis} and verify that all quantities are soundly defined. 

\subsection{Restriction to convergent input functions}
\label{app:math-convergent-inputs}

\begin{property}
    \label{prop:capture-traj}
    For every pair of states $\bfxinit$ and $\bfxf=(\bfcf,\bm{0})$, $\traj_{\bfxinit,\bfxf}$ is non-empty if and only if $\traj^c_{\bfxinit,\bfxf}$ is non-empty.
\end{property}
In other words, if there exists a capture input $\lambda(t),\bfr(t)$ steering an initial state $\bfxinit$ to a static equilibrium $\bfxf$, then there exists another input $\lambda^c(t),\bfr^c(t)$ accomplishing the same while also converging.

\begin{proof}
As $\traj^c_{\bfxinit,\bfxf} \subset \traj_{\bfxinit,\bfxf}$, it is enough to prove that $\traj^c_{\bfxinit,\bfxf}$ is non-empty as soon as $\traj_{\bfxinit,\bfxf}$ is non-empty, \emph{i.e.} that given a input function $\lambda(t),\bfr(t) \in \traj_{\bfxinit,\bfxf}$ we can find another input $\lambda^c(t), \bfr^c(t)$ steering to the same state while converging. To this end, consider the following state-dependent inputs:
\begin{align}
    \sqrt{\bar{\lambda}(\bfx)} & = 2\frac{\sqrt{g(5 h - \hf) + \dot{h}^2} - \hdi}{5 \hi - \hf} \\
    \bar{\bfr}(\bfx) & = \bfc + \frac{\bfc - \bfcf}{4} + \frac{\bfcd}{\sqrt{\bar{\lambda}(\bfx)}} +\frac{\bfg}{\bar{\lambda}(\bfx)}
\end{align}
This definition is chosen so that $\bar{\lambda}(\bfx)$ is the solution of:
\begin{equation}
    \left[\bfc - \bfo + \frac{\bfc - \bfcf}{4}\right] \cdot \bfn + \frac{(\bfcd \cdot \bfn)}{\sqrt{X}} + \frac{(\bfg \cdot \bfn)}{X} = 0
\end{equation}
As a consequence, $(\bar{\bfr}(\bfxinit) - \bfo) \cdot \bfn = 0$ and the state-dependent CoP belongs to the contact area. Moreover, $\bar{\lambda}$ and $\bar{\bfr}$ are continuous functions of $\bfxinit$ in a neighbourhood of $\bfxf$, and $\bar{\lambda}(\bfxf) = \lambdaf(\bfcf)$ and $\bar{\bfr}(\bfxf) = \bfrf(\bfcf)$. Hence, as long as $\bfx(t)$ is close enough to $\bfxf$, both $\bar{\lambda}(\bfx(t))$ and $\bar{\bfr}(\bfx(t))$ are feasible.

Injecting those inputs into \eqref{eq:vhip} yields the nonlinear differential equation:
\begin{equation}
    \label{eqn:ipm-inject}
    \bfcdd(t) = -\frac{\bar{\lambda}(\bfx(t))}{4}(\bfc(t)-\bfcf) - \sqrt{\bar{\lambda}} \bfcd(t)
\end{equation}
It is immediate that $\bfxf$ is an equilibrium for this dynamics. The linearized system around this equilibrium is:
\begin{equation}
  \bfcdd^\ell(t) = -\frac{\lambdaf}{4} (\bfc^\ell(t) - \bfcf) - \sqrt{\lambdaf} \bfcd^\ell(t)
\end{equation}
for which the equilibrium is stable. Therefore the equilibrium $\bfxf$ of \eqref{eqn:ipm-inject} is locally stable: if $\bfx(0)$ is close enough to $\bfxf$, then $\bfx(t)$ remains close to $\bfxf$ and converges toward this limit.

We now consider a generic input function $\lambda(t), \bfr(t) \in \traj_{\bfxinit,\bfxf}$. By definition, the solution of \eqref{eq:vhip} converges to $\bfxf$ as $t \to \infty$. Then, there exists some time $T$ such that $\bfx(T)$ is close enough to $\bfxf$ so that, starting from this position, the state-dependent control remains feasible and converges to $\bfxf$. We conclude by noting that the input function that switches at time $T$ from $\lambda, \bfr$ to $\bar{\lambda}, \bar{\bfr}$ belongs to $\traj_{\bfxinit,\bfxf}^c$.
\end{proof}

\subsection{Solutions to the Riccati equations}
\label{app:math-riccati-solutions}

Let us verify the existence of damping solutions and exhibit some of their properties that will prove useful to characterize those that don't diverge. 

\begin{property}
    \label{prop:omega-bounds}
    Assume that we are given $\lambda$ such that $\lambda(t) \in [\lambdamin,\lambdamax]$ at all times $t$. Then, there exists a unique $\omegai > 0$ such that the solution $\omega$ of \eqref{eq:riccati-omega} with $\omega(0) = \omegai$ is positive and finite at all times. Moreover, this solution is such that:
    \begin{equation}
        \label{eq:omega-bounds}
        \forall t > 0, \quad \omega(t) \in [\sqrt{\lambdamin},\sqrt{\lambdamax}]
    \end{equation}
\end{property}
In other words, there is a one-to-one mapping between the stiffness function $\lambda(t)$ and its non-diverging filtered damping $\omega(t)$. 

\begin{proof}
Consider first the case of a constant input $\lambda$. One can note that the differential equation $\dot{y} = y^2 - \lambda$ has two equilibrium points: one stable $-\sqrt{\lambda}$ and one unstable $\sqrt{\lambda}$. More precisely, given $y_0 \in \mathbb{R}$, the only solution of this equation satisfying $y(0) =y_0$ is given by
\begin{equation}
  y(t) = \begin{cases}
    \frac{\sqrt{\lambda}}{\tanh(\sqrt{\lambda} (T-t))} & \text{ if } |y_0| > \sqrt{\lambda} \\
    \sqrt{\lambda}\tanh(\sqrt{\lambda}(T-t)) & \text{ if } |y_0| < \sqrt{\lambda} \\
    y_0 & \text{ if } |y_0|=\sqrt{\lambda}
  \end{cases}
\end{equation}
The initial condition $y_0$ settles the behavior of the solution at all times. Define the time $T = \frac{1}{2\sqrt{\lambda}} \log \left|\frac{y_0 - \sqrt{\lambda}}{y_0+\sqrt{\lambda}}\right|$, then:
\begin{itemize}
  \item If $0 \leq y_0 < \sqrt{\lambda}$, then $\lim_{t \to \infty} y(t) = -\sqrt{\lambda}$ and the solution $y$ becomes negative after time $T$.
  \item If $y_0 = \sqrt{\lambda}$, then $y(t) = \sqrt{\lambda}$ for all $t > 0$.
  \item If $y_0 > \sqrt{\lambda}$, then $\lim_{t \to T} y(t) = +\infty$: the solution explodes in finite time.
\end{itemize}

Let us move now to the general case where $\lambda(t)$ is time-varying, and denote by $\omega$ a non-negative, non-explosive solution to \eqref{eq:riccati-omega}. If $\omega(t_0)>\sqrt{\lambdamax}$ at some time $t_0 > 0$, then choosing the solution $y$ to $\dot{y} = y^2 - \lambdamax$ with $y(t_0) = \omega(t_0)$, we observe that, as long as $0 \leq y(t)\leq \omega(t)$,
\begin{equation}
    \dot{\omega}(t) - \dot{y}(t) = \omega^2(t) - y^2(t) + \lambdamax - \lambda(t)  \geq 0
\end{equation}
Therefore, $\omega - y$ is nondecreasing, and $y(t) \leq \omega(t)$ holds until the explosion time $T$ of $y$. This shows that $\omega$ explodes in finite time, in contradiction with the hypothesis. Similarly, if $\omega(t_0)< \sqrt{\lambdamin}$ for some $t_0 > 0$, we can upper-bound $\omega(t)$ by the solution to $\dot{y} = y^2 - \lambdamin$ such that $y(t_0) = \omega(t_0)$, which becomes negative in finite time, once again contradicting the hypothesis. The bounds~\eqref{eq:omega-bounds} must therefore hold.

We now prove the uniqueness of the non-negative non-exploding solution of \eqref{eq:riccati-omega}. Suppose that one could find two such solutions, $\omega_1$ and $\omega_2$. As observed above, these functions remain in the interval $[\sqrt{\lambdamin},\sqrt{\lambdamax}]$. Consider without loss of generality that $\omega_1(0)>\omega_2(0)$. Then, as long as $\omega_1(t)>\omega_2(t)$,
\begin{equation}
    \dot{\omega_1} - \dot{\omega_2} = (\omega_1 - \omega_2)(\omega_1 + \omega_2) \geq 2 \sqrt{\lambdamin} (\omega_1 - \omega_2)
\end{equation}
As a consequence, $\omega_1(t) - \omega_2(t) \geq (\omega_1(0)-\omega_2(0)) e^{2 \sqrt{\lambdamin} t}$ at all times, showing that the two functions cannot be bounded at the same time.

To finally prove the existence of the solution, we observe there exists a unique $\omega(0) \in [\sqrt{\lambdamin},\sqrt{\lambdamax}]$ such that for all $t>0$, we have $\overline{y}_t(0) > \omega(0) > \underline{y}_t(0)$ where $\overline{y}_t$ and $\underline{y}_t$ are the solutions of \eqref{eq:riccati-omega} with conditions $\overline{y}_t(t) = \sqrt{\lambdamax}$ and $\underline{y}_{t}(t) = \sqrt{\lambdamin}$. The solution $\omega$ starting from this value $\omega(0)$ remains within bounds by construction (the time it crosses $\sqrt{\lambdamin}$ or $\sqrt{\lambdamax}$ is greater than any finite time $t$).
\end{proof}

Let us now turn to the other damping $\gamma$. While there is a unique solution $\omega$ corresponding to a given $\lambda$, there are many different non-negative finite functions $\gamma$ that satisfy \eqref{eq:riccati-gamma}. As a matter of fact, each choice of $\gamma(0) > 0$ yields an admissible solution:
\begin{property}
    \label{prop:gamma-bounds}
    Assume that we are given $\lambda$ such that $\lambda(t) \in [\lambdamin,\lambdamax]$ at all times $t$. For all $\gamma(0) > 0$, the solution $\gamma$ of \eqref{eq:riccati-gamma} is non-negative and finite at all times. Moreover:
    \begin{equation}
        \sqrt{\lambdamin} \leq \liminf_{t \to \infty} \gamma(t) \leq \limsup_{t \to \infty} \gamma(t) \leq \sqrt{\lambdamax}
    \end{equation}
\end{property}

\begin{proof}
The existence and the uniqueness of the solution on a maximal interval are consequences of the Cauchy-Lipschitz theorem. As in the previous proof, we can compare $\gamma$ with the functions $\underline{y}$ and $\overline{y}$, respectively solutions to $\dot{y} = \lambdamin - y^2$ and $\dot{y} = \lambdamax - y^2$ with $\underline{y}(0) = \overline{y}(0) = \gamma(0)$. Then, $\underline{y}(t) \leq \gamma(t) \leq \overline{y}(t)$ at all times $t$, and the rest of the proof is a consequence of $\lim_{t \to \infty} \underline{y}(t) = \sqrt{\lambdamin}$ and $\lim_{t \to \infty} \overline{y}(t) = \sqrt{\lambdamax}$.
\end{proof}

An interesting consequence of these two properties is the following asymptotic behavior:
\begin{corollary}
\label{cor:cv}
If $\lim_{t \to \infty} \lambda(t) = \lambdaf$, then
\begin{equation}
  \lim_{t \to \infty} \omega(t) = \lim_{t \to \infty} \gamma(t) = \sqrt{\lambdaf}.
\end{equation}
\end{corollary}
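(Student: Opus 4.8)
The plan is to derive Corollary~\ref{cor:cv} from the boundedness of the damping solutions established in Properties~\ref{prop:omega-bounds} and~\ref{prop:gamma-bounds}, combined with a comparison argument against the \emph{autonomous} Riccati flows $\dot y = y^2 - c$ and $\dot y = c - y^2$ for $c = \lambdaf \pm \epsilon$, whose asymptotics are already computed in the proof of Property~\ref{prop:omega-bounds}. First I would observe that, since $\lambda(t) \in [\lambdamin,\lambdamax]$ for all $t$ and $\lambda(t) \to \lambdaf$, the limit satisfies $\lambdaf \in [\lambdamin,\lambdamax]$; in particular $\lambdaf > 0$, so for every small enough $\epsilon > 0$ one has $\lambdaf - \epsilon > 0$ and one can pick a time $T_\epsilon$ with $\lambdaf - \epsilon \le \lambda(t) \le \lambdaf + \epsilon$ for all $t \ge T_\epsilon$. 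The whole corollary then follows by letting $\epsilon \to 0$, so it suffices to trap $\omega$ and $\gamma$ asymptotically in the interval $[\sqrt{\lambdaf-\epsilon},\sqrt{\lambdaf+\epsilon}]$.

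For $\omega$, recall it is the \emph{unique} positive non-exploding solution of $\omegad = \omega^2 - \lambda$ and that $\omega(t) \in [\sqrt{\lambdamin},\sqrt{\lambdamax}]$ by Property~\ref{prop:omega-bounds}. I would argue by contradiction: if $\omega(t_0) > \sqrt{\lambdaf+\epsilon}$ at some $t_0 > T_\epsilon$, compare $\omega$ with the solution $\bar y$ of $\dot{\bar y} = \bar y^2 - (\lambdaf+\epsilon)$ with $\bar y(t_0) = \omega(t_0)$; since $\bar y(t_0)$ lies above the unstable equilibrium $\sqrt{\lambdaf+\epsilon}$, $\bar y$ blows up in finite time (explicit formula from the proof of Property~\ref{prop:omega-bounds}). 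Writing $w = \omega - \bar y$, one gets $w(t_0) = 0$ and $\dot w \ge (\omega+\bar y)\,w$ (using $\lambda \le \lambdaf+\epsilon$), whence a Gronwall estimate gives $w \ge 0$ on the maximal interval of $\bar y$, i.e. $\omega \ge \bar y$, so $\omega$ would explode as well, contradicting its boundedness. Symmetrically, if $\omega(t_0) < \sqrt{\lambdaf-\epsilon}$, comparing with the solution $\underline y$ of $\dot{\underline y} = \underline y^2 - (\lambdaf-\epsilon)$ with the same initial value gives $\omega \le \underline y$, and $\underline y$, starting below the unstable equilibrium, decreases and crosses $0$ in finite time, contradicting $\omega > 0$. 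Hence $\sqrt{\lambdaf-\epsilon} \le \omega(t) \le \sqrt{\lambdaf+\epsilon}$ for all $t > T_\epsilon$, and letting $\epsilon \to 0$ gives $\omega(t) \to \sqrt{\lambdaf}$.

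For $\gamma$, Property~\ref{prop:gamma-bounds} already tells us $\gamma$ is positive everywhere and bounded for $t$ large, so here a direct comparison suffices: let $\bar y$ (resp. $\underline y$) solve $\dot{\bar y} = (\lambdaf+\epsilon) - \bar y^2$ (resp. $\dot{\underline y} = (\lambdaf-\epsilon) - \underline y^2$) with $\bar y(T_\epsilon) = \underline y(T_\epsilon) = \gamma(T_\epsilon) > 0$. Both converge, to $\sqrt{\lambdaf+\epsilon}$ and $\sqrt{\lambdaf-\epsilon}$ respectively. From $\lambda \le \lambdaf+\epsilon$ one gets that $w = \gamma - \bar y$ satisfies $w(T_\epsilon) = 0$ and $\dot w \le -(\gamma+\bar y)\,w$, hence $w \le 0$ and $\gamma \le \bar y$; similarly $\lambda \ge \lambdaf-\epsilon$ yields $\gamma \ge \underline y$. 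Therefore $\sqrt{\lambdaf-\epsilon} \le \liminf_{t\to\infty}\gamma(t) \le \limsup_{t\to\infty}\gamma(t) \le \sqrt{\lambdaf+\epsilon}$, and $\epsilon \to 0$ gives $\gamma(t) \to \sqrt{\lambdaf}$.

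The main obstacle I anticipate is the careful handling of the $\omega$ case: because $\omega$ is characterized as \emph{the} non-exploding solution rather than by a prescribed initial condition, the argument must be phrased as ``any excursion of $\omega$ beyond $\sqrt{\lambdaf\pm\epsilon}$ forces $\omega$ to inherit the explosion (or the sign change) of an autonomous comparison solution, which is impossible'', and one must check that the differential inequalities used in the Gronwall step genuinely hold on the entire interval where the comparison solution is defined. The $\gamma$ half is essentially routine — a two-sided Gronwall sandwich between convergent autonomous Riccati flows — and, as a by-product, it recovers (and sharpens into genuine convergence) the $\liminf/\limsup$ bounds of Property~\ref{prop:gamma-bounds} under the extra hypothesis that $\lambda$ converges.
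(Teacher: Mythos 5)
Your proof is correct and follows essentially the same route as the paper: trap $\omega$ and $\gamma$ in $[\sqrt{\lambdaf-\epsilon},\sqrt{\lambdaf+\epsilon}]$ on a tail where $\lambda(t) \in [\lambdaf-\epsilon,\lambdaf+\epsilon]$, then let $\epsilon \to 0$. The only difference is one of economy: the paper applies Properties~\ref{prop:omega-bounds} and~\ref{prop:gamma-bounds} as black boxes to the time-shifted function $\widetilde{\omega}(t) \defeq \omega(t+t_0)$, which solves the same Riccati equation driven by the shifted stiffness, whereas you re-run the comparison arguments with the autonomous Riccati flows (the content of those properties' proofs) directly on the tail.
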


\begin{proof}
We only consider the case of $\omega$, the proof for $\gamma$ following the same derivation. By definition of the limit, for any $\epsilon>0$, there exists $t_0>0$ large enough so that $\forall t > t_0, |\lambda(t) - \lambdaf| < \epsilon$. Next, remark that the time-shifted function $\widetilde{\omega}(t) \defeq \omega(t+t_0)$ is a solution of the equation $\dot{\widetilde{\omega}}(t) = \widetilde{\omega}(t)^2 - \widetilde{\lambda}$, where $\forall t>0, \widetilde{\lambda}(t) = \lambda(t+t_0) \in [\lambdaf - \epsilon, \lambdaf + \epsilon]$. Property \ref{prop:omega-bounds} then shows that $\omega(t + t_0) \in [\sqrt{\lambdaf-\epsilon},\sqrt{\lambdaf + \epsilon}]$ for all $t > 0$.
As a consequence, for any $\epsilon > 0$:
\begin{equation}
    \sqrt{\lambdaf - \epsilon} \leq \liminf_{t \to \infty} \omega(t) \leq \limsup_{t \to \infty} \omega(t) \leq \sqrt{\lambdaf + \epsilon}
\end{equation}
Taking the limit as $\epsilon \to 0$, we conclude that the limit of $\omega$ as $t \to \infty$ exists and is equal to $\sqrt{\lambdaf}$.
\end{proof}

We conclude from the above properties that the solutions $\bfzeta$ and $\bfxi$ to Equation~\eqref{eq:decoupledsystem} are well-defined.

\subsection{Convergent component of motion}
\label{app:ccm}

The component $\bfzeta$ corresponding to the damping $\gamma$ is subject to the differential equation:
\begin{equation}
  \label{eq:zeta}
    \bfzetad = - \gamma \bfzeta + (\lambda \bfr - \bfg)
\end{equation}
The general solution to this equation is given by:
\begin{equation}
    \label{eq:zeta(t)}
    \bfzeta(t) = \left(\bfzeta(0) + \int_0^t e^{\Gamma(\tau)} (\lambda(\tau) \bfr(\tau) - \bfg) \dd{\tau}\right) e^{-\Gamma(t)}
\end{equation}
where $\Gamma$ is the antiderivative of $\gamma$ such that $\Gamma(0)=0$, \emph{i.e.}~$\Gamma(t) = \int_0^t \gamma(t) \dd{t}$. It satisfies the following two identities:
\begin{align}
    \frac{\dd{e^{\Gamma}}}{\dd{t}} & = \gamma e^{\Gamma} &
    \frac{{\rm d}^2{e^{\Gamma}}}{\dd{t}^2} & = (\gammad + \gamma^2) e^{\Gamma} = \lambda e^{\Gamma}
\end{align}
The asymptotic behavior of $\bfzeta$ is tied to that of the two inputs $\lambda$ and $\bfr$ of the inverted pendulum:
\begin{property}
    \label{prop:zeta-conv}
    Consider an input function $\lambda(t), \bfr(t)$ such that $\lim_{t \to \infty} \lambda(t) = \lambdaf$ and $\lim_{t\to \infty} \bfr(t) = \bfrf$, and let $\gamma$ denote any solution to \eqref{eq:riccati-gamma}. Then, the solution $\bfzeta$ of \eqref{eq:zeta} satisfies:
    \begin{equation}
      \label{eq:zeta-limit}
      \lim_{t \to \infty} \bfzeta(t) = \sqrt{\lambdaf} \left(\bfrf - \frac{\bfg}{\lambdaf}\right) = \sqrt{\lambdaf} \bfcf
    \end{equation}
\end{property}

\begin{proof}
By Corollary \ref{cor:cv}, $\lim_{t \to \infty} \gamma(t) = \sqrt{\lambdaf}$, therefore its antiderivative $\Gamma(t)$ satisfies $\lim_{t \to \infty} \Gamma(t)/t = \sqrt{\lambdaf}$ as well. In particular, $\Gamma(t)$ diverges to $\infty$, so that:
\begin{equation}
    \bfzeta(t) \underset{t \to \infty}{\sim} e^{-\Gamma(t)} \int_0^t e^{\Gamma(\tau)} (\lambda(\tau) \bfr(\tau) - \bfg) \dd{\tau}
\end{equation}
where the notation $f \sim g$ means that the ratio $f/g$ goes to 1 as $t \to \infty$. Applying l'H\^opital's rule, we conclude that:
\begin{align}
     \lim_{t \to \infty} \bfzeta(t) & = \lim_{t \to \infty} \frac{\int_0^t e^{\Gamma(\tau)} (\lambda(\tau) \bfr(\tau) - \bfg) \dd{\tau}}{e^{\Gamma(t)}}
     \\ 
     & = \lim_{t \to \infty} \frac{e^{\Gamma(t)}(\lambda(t) \bfr(t) - \bfg)}{\gamma(t)e^{\Gamma(t)}} = \frac{(\lambdaf \bfrf - \bfg)}{\sqrt{\lambdaf}}
     \qedhere
\end{align}
\end{proof}

\subsection{Divergent component of motion}
\label{app:dcm}

The general solution~\eqref{eq:xi(t)} of the divergent component of motion is based on the antiderivative $\Omega(t) = \int_0^t \omega(t) \dd{t}$. Recalling from Property~\ref{prop:omega-bounds} that $\omega \in [\sqrt{\lambdamin}, \sqrt{\lambdamax}]$, we see that $\Omega$ grows at least linearly. Therefore, as long as $\lambda$ and $\bfr$ remain bounded, the integral $\int_0^\infty e^{-\Omega(\tau)} (\bfg - \lambda(\tau) \bfr(\tau)) \dd{\tau}$ is well-defined and finite. Let us now prove Property~\ref{prop:xi-conv}.

\begin{proof}
The proof is very similar to that of Property \ref{prop:zeta-conv}. The solution \eqref{eq:xi(t)} with $\bfxi(0) = \bfxiinit$ from Equation~\eqref{eq:xi-initial} becomes:
\begin{equation}
    \bfxi(t) = e^{\Omega(t)} \int_t^\infty e^{-\Omega(\tau)} (\lambda(\tau)\bfr(\tau) - \bfg) \dd{\tau}
\end{equation}
Applying l'H\^opital's rule, we conclude by Corollary~\ref{cor:cv} that:
\begin{align}
    \lim_{t \to \infty} \bfxi(t) & = \lim_{t \to \infty} \frac{e^{-\Omega(t)} (\lambda(t)\bfr(t) - \bfg)}{\omega(t) e^{-\Omega(t)}} = \frac{\lambdaf \bfrf - \bfg}{\sqrt{\lambdaf}}
\end{align}
That is to say, similarly to the convergent component of motion, the solution $\bfxi$ of \eqref{eq:xi-def} converges to $\sqrt{\lambdaf} \bfcf$.
\end{proof}

\subsection{Proof of Property~\ref{prop:capture-inputs}}
\label{app:capture-inputs}

\begin{proof}[Proof of the $\Rightarrow$ implication]
    Let $\lambda(t), \bfr(t)$ denote a capture input from $\traj^c_{\bfxinit,\bfxf}$, with $\bfx(t)$ the smooth trajectory resulting from this input via the equation of motion~\eqref{eq:vhip}. Its boundary values $\bfx(0) = \bfxinit$ and $\bfx(\infty) = \bfxf$ being bounded, this trajectory must be bounded as well. As $\bfzeta(t)$ is always bounded by Property \ref{prop:zeta-conv}, this in turns implies that its the divergent component $\bfxi(t)$ is bounded, and must therefore satisfy Equation~\eqref{eq:boundedness} by Property~\ref{prop:xi-conv}. Next, let us denote by $\lambdaf, \bfrf$ the limits of $\lambda(t),\bfr(t)$ as time goes to infinity. Using Properties \ref{prop:zeta-conv} and \ref{prop:xi-conv}, the two components converge to:
    \begin{equation}
      \label{eqn:cvxizeta}
      \lim_{t \to \infty} \bfzeta(t) = 
      \lim_{t \to \infty} \bfxi(t) = 
      \sqrt{\lambdaf}\left(\bfrf - \frac{\bfg}{\lambdaf}\right)
    \end{equation}
    Recalling from Corollary~\ref{cor:cv} that $\gamma$ and $\omega$ converge to $\sqrt{\lambdaf}$, we can take the limit in the mapping \eqref{eq:S-Sinv}--\eqref{eq:def-xi}:
    \begin{equation}
        \lim_{t \to \infty} \bfc(t) = \lim_{t \to \infty} \frac{\bfzeta(t) + \bfxi(t)}{\gamma(t) + \omega(t)} = \bfrf - \frac{\bfg}{\lambdaf} = \bfcf
    \end{equation}
    Therefore, $\lambdaf = \lambdaf(\bfcf)$ and $\bfrf = \bfrf(\bfcf)$.
\end{proof}

\begin{proof}[Proof of the $\Leftarrow$ implication]
   Reciprocally, assuming (\emph{i})--(\emph{iii}), Equation~\eqref{eqn:cvxizeta} holds again by Properties \ref{prop:zeta-conv}--\ref{prop:xi-conv} and Corollary~\ref{cor:cv}. Furthermore,
    \begin{align}
        \lim_{t \to \infty} \bfcd(t) & = \lim_{t \to \infty} \frac{-\omega(t) \bfzeta(t) + \gamma(t) \bfxi(t)}{\gamma(t) + \omega(t)} = \zerovec
    \end{align}
    Thus, the pendulum driven by $\lambda(t), \bfr(t)$ converges to the static equilibrium $\bfxf$.
\end{proof}

\subsection{Note on fixed-CoP strategies}
\label{app:fixed-cop}

When the CoP input is stationary, \emph{i.e.}~in a point-foot model, the Gram determinant $G = ((\bfc - \bfr) \times \bfcd) \cdot \bfg$ becomes invariant.

\begin{proof}[Short proof]
Take the cross-product of Equation~\eqref{eq:vhip} with $\bfcdd$. Then, the scalar product of the resulting expression with $\bfg$ yields $((\bfc - \bfr) \times \bfcdd) \cdot \bfg = 0$. Conclude by noting that this formula is the time derivative of $((\bfc - \bfr) \times \bfcd) \cdot \bfg$.
\end{proof}

There are two possible outcomes: either $G=0$, which means the three vectors are coplanar and the robot may stabilize using a 2D strategy~\cite{pratt2007icra, ramos2015humanoids, koolen2016humanoids}; or $G \neq 0$ and it is impossible to bring the system to an equilibrium where $\bfcd = 0$. This shows simultaneously two properties: first, that sagittal 2D balance control is the most general solution for point-foot models, and second, that these models have a very limited ability to balance, as they need to re-step at the slightest lateral change in linear momentum. The ability of flat-footed bipeds to absorb these perturbations (to some extent) without stepping comes from continuous CoP variations.

\section{Numerical optimization complement}
\label{app:optim-background}

In this Appendix, we first recall terminology and state-of-the-art algorithms for numerical optimization. We essentially rewrite treatment from~\cite{nocedal:book:2006} for double-sided inequality constraints. We then detail all tailored operations mentioned in Section~\ref{sec:optim}.

\subsection{Definitions and notations}
Consider the optimization problem:
\begin{subequations}
    \label{eq:generic-problem}  
    \begin{align}
        \minimize_{\bfx \in \bbR^n} &\ f(\bfx)\\
	    \subjto &\ \bfl \leq \bfh(\bfx) \leq \bfu
    \end{align}
\end{subequations}
where $f$ and $\bfh$ are smooth functions, $f$ being $1$-dimensional and $\bfh$ $m$-dimensional. Lower and upper bound constraints are represented by vectors $\bfl, \bfu \in \bbR^m$, with equality constraints specified by taking $l_j = u_j$. A point $\bfx$ is \emph{feasible} if it satisfies all constraints. For a given $\bfx$, we say the $\th{j}$ constraint is \emph{active} at its lower (resp. upper) bound when $h_j(\bfx) = l_j$ (resp. $h_j(\bfx) = u_j$). We denote by:
\begin{align}
    \calE & \defeq \left\{j \in \left[1,m\right], l_j=u_j\right\} \\
    \calA(\bfx)^- & \defeq \left\{j \notin \calE, h_j(\bfx) = l_j\right\} \\
    \calA(\bfx)^+ & \defeq \left\{j \notin \calE, h_j(\bfx) = u_j\right\}
\end{align}
These three sets are disjoint. For a set of indexes $\calS$ and a matrix $\bfM$, we define $\bfM_{\calS}$ the matrix made of the rows of $\bfM$ whose indexes are in $\calS$ (this notation also applies to vectors).

The \emph{Lagrangian} of the problem is defined as $\calL(\bfx, \bflambda^-, \bflambda^+) \defeq f(\bfx) + \bflambda^{-T} (\bfh(\bfx) - \bfl) + \bflambda^{+T} (\bfh(\bfx) - \bfu)$ where $\bflambda^-, \bflambda^+ \in \bbR^m$ are the \emph{Lagrange multipliers}, $\bfnabla_{\bfx} g$ and $\bfnabla^2_{\bfx\bfx} g$ are respectively the gradient and Hessian of a function $g$ with respect to $\bfx$. 
We note $\bflambda \defeq \bflambda^- + \bflambda^+$. We can work with it instead of $\bflambda^-$ and  $\bflambda^+$ (see~\cite[\S 4.3.5]{brossette:phd:2016}).

The \emph{Karush--Kuhn--Tucker} (KKT) conditions give necessary conditions on $\bfx$ and $\bflambda$ for $\bfx$ to be a minimizer of Problem~(\ref{eq:generic-problem}) (see~\cite[chap. 12]{nocedal:book:2006}). They are often used as termination conditions in solvers.

\subsection{Active-set method for Quadratic Programming}

When the objective $f$ is quadratic, $f(\bfx) = \frac{1}{2}\bfx^T \bfQ \bfx + \bfq^T \bfx$, with $\bfQ$ symmetric positive semidefinite and $\bfh$ linear, $\bfh(\bfx) = \bfC \bfx$ for some matrix $\bfC$, Problem~\eqref{eq:generic-problem} is a (convex) Quadratic Program with Inequality constraints (QPI). One of the main approaches to solve it is the \emph{active-set} method. This method iteratively discovers the set of constraints active at the solution\footnote{
    We ignore here for the sake of simplicity a subtlety arising when active constraints are linearly dependent.
} by solving at each iteration $k$ the following Quadratic Program with only Equality constraints (QPE):
\begin{subequations}
	\label{eq:QP-subproblem}
    \begin{align}
        \minimize_{\bfp \in \bbR^n} &\ \frac{1}{2} \bfp^T \bfQ \bfp + (\bfQ \bfx_k + \bfq)^T \bfp \\
        \subjto &\ \bfC_{\calW_k} \bfp = 0
    \end{align}
\end{subequations}
where $\calW_k$ is a set of indexes. The solution $\bfp^*$ to this QPE is used to determine the next iterate $\bfx_{k+1}$.

Unlike QPIs, QPEs admit analytical solutions as their KKT conditions reduce to a linear system. For a given $\bfx_k$, we can retrieve $\bflambda$ with:
\begin{align}
	\lambda_{\calA_k} & = -\bfC_{\calA_k}^{\dagger T} \bfnabla_{\bfx}^T f(\bfx_k),
    & \lambda_i & =0, \forall i\notin \calA_k \label{eq:lambda}
\end{align}
where $\square^{\dagger}$ denotes the Moore-Penrose pseudo-inverse and $\calA_k = \calA(\bfx_k)^- \cup \calA(\bfx_k)^+ \cup \calE$ is the active set at $\bfx_k$, \emph{i.e.} the set of constraints that are active at this point.

The active-set method for convex QPIs is given in Algorithm~\ref{alg:active-set}. See~\cite[chap. 16]{nocedal:book:2006} for more details on this method. 

\begin{algorithm}
	\caption{Active-set algorithm for convex QPI}
    \begin{algorithmic}
		\STATE Given a feasible point $\bfx_0$
		\STATE Let $\calW_0^- = \calA(\bfx_0)^-$, $\calW_0^+ = \calA(\bfx_0)^+$
		\FOR{$k = 0,1,2, \ldots$}
		  \STATE Compute $\bfp$ from \eqref{eq:QP-subproblem} with $\calW_k = \calW_k^- \cup \calW_k^+ \cup \calE$
			\IF{$\bfp = 0$}
				\STATE Compute $\bflambda$ using Equation~\eqref{eq:lambda}
				\IF{$\bfx$ and $\bflambda$ verify the KKT conditions}
					\STATE \textbf{return} the solution $\bfxf = \bfx_k$
				\ELSE
					\STATE Choose $j$ such that $\lambda_j$ violates the KKT conditions
					\STATE $\bfx_{k+1} = \bfx_k$, $\calW_{k+1}^- = \calW_k^- \backslash \left\{j\right\}$, $\calW_{k+1}^+ = \calW_k^+ \backslash \left\{j\right\}$
				\ENDIF
			\ELSE
				\STATE Find the largest $\alpha\leq 1$ such that $\bfx_k + \alpha \bfp$ is feasible.
				\STATE $\bfx_{k+1} = \bfx_k + \alpha \bfp$
				\IF{some constraints have been activated doing so}
					\STATE Let $j$ be the index of one of them
					\STATE Obtain $\calW_{k+1}^-$ and $\calW_{k+1}^+$ from $\calW_{k}^-$ and $\calW_{k}^+$ by adding $j$ to the appropriate set
				\ELSE
					\STATE $\calW_{k+1}^- = \calW_{k}^-$, $\calW_{k+1}^+ = \calW_{k}^+$. 
				\ENDIF
			\ENDIF
		\ENDFOR
	\end{algorithmic}
	\label{alg:active-set}
\end{algorithm}

\subsection{Sequential Quadratic Programming}

Sequential quadratic programming (SQP) is an iterative optimization technique for solving general constrained problems such as~\eqref{eq:generic-problem}. At each iteration $k$, a QP approximation of~\eqref{eq:generic-problem} is formed and solved:
\begin{subequations}
	\label{eq:sqp-subproblem}
	\begin{align}
		\minimize_{\bfp \in \bbR^n} & \ f(\bfx_k) + \bfnabla_{\bfx} f(\bfx_k)^T \bfp + \frac{1}{2} \bfp^T \bfB_k \bfp\\
		\subjto & \ \bfl - \bfh(\bfx_k) \leq \bfnabla_{\bfx} \bfh(\bfx_k)^T \bfp \leq \bfu - \bfh(\bfx_k)
	\end{align}
\end{subequations}
where $\bfB_k$ can be $\bfnabla^2_{\bfx\bfx} \calL(\bfx_k, \bflambda_k)$ or, for faster computations, some positive-definite approximation of it.

\begin{algorithm}
	\caption{Line search SQP}
    \begin{algorithmic}
        \STATE Given a stepping parameter $\tau \in (0, 1)$
		\STATE Choose $(\bfx_0, \bflambda_0)$
		\WHILE{the KKT conditions are not satisfied}
		  \STATE Compute $\bfp$ from~\eqref{eq:sqp-subproblem}
			\STATE Let $\bflambda$ be the corresponding multiplier and $\bfp_{\lambda} = \bflambda - \bflambda_k$
			\STATE $\alpha = 1$
			\WHILE{$\alpha \bfp$ does not yield an acceptable step}
				\STATE $\alpha = \tau \alpha$
			\ENDWHILE
			\STATE $\bfx_{k+1} = \bfx_k + \alpha \bfp$, $\bflambda_{k+1} = \bflambda_k + \alpha \bfp_{\lambda}$
		\ENDWHILE
	\end{algorithmic}
	\label{alg:sqp}
\end{algorithm}

The outline of the SQP method is given in Algorithm~\ref{alg:sqp}. There are several criteria for assessing whether a step is acceptable, see~\cite[chap. 18]{nocedal:book:2006} for details.

\subsection{Cost matrix of the unconstrained least squares problem}
\label{app:build-T}

Consider the active set $\calW$ for a given SQP iteration. Starting at the first constraint, count the number $a_0$ of consecutive active constraints (possibly $0$ if the first constraint is not active), then $j_1$ the number of following consecutive inactive constraints, $a_1$ the number of following active constraints,~\emph{etc.} The set $\calW$ is then fully described by the sequence $(a_0, j_1, a_1, j_2, a_2, \ldots, j_p, a_p)$, where only $a_0$ and $a_p$ can be $0$. Note that $\sum_k a_k + \sum_k j_k = n+1$, and let us define $n_{\calW} \defeq \sum_k a_k$.
For example, if $\calW = \left\{1, 2, 6, 9, 10, 11, 13, 14\right\}$ for $n=15$ optimization variables, we get the sequence $(2,3,1,2,3,1,2,2,0)$ and $n_{\calW} = 8$.
The constraint matrix $\bfC_\calW$ is then the $n_{\calW}\times n$ matrix:
\begin{equation}
\bfC_\calW = 
  \BIN \bfC_0 &     &     &     &     &        &     &    \\
           & \hspace{-5pt}\zeromat{a_1}{j_1-1} & \hspace{-3pt}\bfC_1 &     &     &        &     &    \\
           &     &     & \hspace{-5pt}\zeromat{a_2}{j_2-1} & \hspace{-3pt}\bfC_2 &        &     &    \\
           &     &     &     &     & \ddots &     &    \\
           &     &     &     &     &        & \hspace{-12pt}\zeromat{a_p}{j_p-1} & \hspace{-3pt}\bfC_p\BOUT
\end{equation}
where $\zeromat{m}{q}$ is the $m \times q$ zero matrix, while $\bfC_0$, $\bfC_k$ ($k\in[1, p-1]$) and $\bfC_p$ are respectively $a_0 \times a_0$, $a_k \times (a_k+1)$ and $a_p \times a_p$ matrices ($\bfC_0$ and $\bfC_p$ can be empty) of the form:
\begin{equation}
\small
  \bfC_0 \hspace{-2pt}=\hspace{-3pt} 
	      \BIN 1 & \hspace{-5pt}       & \hspace{-5pt}       & \hspace{-5pt}  \\
            -1 & \hspace{-5pt}   1   & \hspace{-5pt}       & \hspace{-5pt}  \\
               & \hspace{-5pt}\ddots & \hspace{-5pt}\ddots & \hspace{-5pt}  \\
               & \hspace{-5pt}       & \hspace{-7pt}  -1   & \hspace{-6pt}1 \BOUT\hspace{-3pt}, \
  \bfC_k \hspace{-2pt}=\hspace{-3pt} 
	      \BIN-1 & \hspace{-5pt}   1   & \hspace{-5pt}       & \hspace{-5pt}  \\
               & \hspace{-5pt}\ddots & \hspace{-5pt}\ddots & \hspace{-5pt}  \\
               & \hspace{-5pt}       & \hspace{-7pt}  -1   & \hspace{-6pt}1 \BOUT\hspace{-3pt}, \
  \bfC_p \hspace{-2pt}=\hspace{-3pt} 
	      \BIN-1 & \hspace{-5pt}   1    & \hspace{-5pt}       & \hspace{-5pt}  \\
               & \hspace{-5pt}\ddots  & \hspace{-5pt}\ddots & \hspace{-5pt}  \\
               & \hspace{-5pt}        & \hspace{-7pt}  -1   & \hspace{-6pt}1 \\
               & \hspace{-5pt}        & \hspace{-7pt}       & \hspace{-6pt}1 \BOUT
\end{equation}

Denoting by $\bm{1}_a$ the vector of size $a$ filled with ones, the nullspace projection matrix for the active set $\calW$ is:
\begin{equation}
  \bfN_{\mathcal{W}} = \BIN 
    \zeromat{a_0}{i_1-1} & & & & & \\
    \bfI_{i_1-1} & & & & & \\
    & \hspace{-5pt}\bm{1}_{a_1+1} & & & & \\
    && \hspace{-5pt}\bfI_{i_2-1} & & & \\
    &&& \hspace{-5pt}\bm{1}_{a_2+1} & & \\
    &&&&\ddots & \\
    &&&&& \hspace{-7pt}\bfI_{i_p-1} \\
    &&&&& \hspace{-7pt}\zeromat{a_p}{i_p-1}
  \BOUT
\end{equation}
Noting that $\bfC_k \bm{1}_{a_k+1} = \zeromat{a_k}{1}$, we can directly verify that $\bfC_{\mathcal{W}} \bfN_{\mathcal{W}} = \zerovec$. 
The matrix $\bfN_{\mathcal{W}}$ is $n$ by $n-n_{\mathcal{W}}$ and full column rank. 
It is thus a basis of the nullspace of $\bfC_{\mathcal{W}}$. 

Computing the product $\bfM \bfN_\calW$ for a given matrix $\bfM$ does not actually require to perform any multiplication: multiplying by $\bm{1}$ amounts simply to the summation of columns of $\bfM$. Likewise, $\bfN_\calW \bfz$ just requires to copy the elements of $\bfz$. 
It is thus not necessary to form $\bfN_\calW$, and $\bfT$ can be obtained by $\sum a_k = n_{\calW}$ vector additions. Taking into account the tridiagonal structure of $\bfJ$, this can be done in $O(n)$.

\subsection{Computation of Lagrange multipliers}
\label{app:lagrange-multipliers}

The computation of the Lagrange multipliers, needed to check KKT conditions, relies on the pseudoinverse of $\bfC_{\mathcal{W}}$ (see \emph{e.g.} Equation~(\ref{eq:lambda})). Due to its block structure, expressing the latter is done by finding the pseudoinverse for each $\bfC_k$:
\begin{equation}
  \bfC_{\mathcal{W}}^\dagger = \BIN
    \bfC_0^{-1} & & & & \\
    & \zeromat{i_1-1}{a_1} & & & \\
    & \bfC_1^\dagger& & &\\
    && \zeromat{i_2-1}{a_2} & & & \\
    && \bfC_2^\dagger&  & &\\
    &&& \ddots & \\
    &&&& \zeromat{i_p-1}{a_p} \\
    &&&& \bfC_p^{-1}
  \BOUT
\end{equation}
where $\square^\dagger$ denotes the pseudoinverse. It can be verified that
\begin{align}
  &\bfC_0^{-1} = \BIN 
    1 &   & \\
    \vdots & \ddots & \\
    1 & \ldots & 1 \BOUT, \ 
  \bfC_p^{-1} = \BIN 
    -1 & \ldots & -1 & 1 \\
       & \ddots & \vdots & 1\\
       &        &  -1   & 1 \\
       &&& 1 \BOUT\\
 &\bfC_k^{\dagger} = \frac{1}{a_k+1} \BIN
   -a_k & -(a_k-1) & -(a_k-2) & \ldots & -1 \\
     1  & -(a_k-1) & -(a_k-2) & \ldots & -1 \\
     1  &    2     & -(a_k-2) & \ldots & -1 \\
     1  &    2     &    3     & \ddots & \vdots \\
        & \vdots   &          & \ddots & -1 \\
     1  &    2     &    3     & \ldots & a_k
 \BOUT
\end{align}
The computation of Lagrange multipliers can thus be done in $O(n^2)$ without forming the pseudoinverse explicitly.

\subsection{QR decomposition of the cost matrix \texorpdfstring{$\bfT$}{T}}
\label{app:qr-T}

The decomposition can be performed in two steps: first the QR decomposition $\bfJ_\calW = \bfQ_\calW \bfR_\calW$ of $\bfJ_\calW \defeq \bfJ \bfN_\calW$, followed by the QR decomposition $\sBIN \mu \bfj^T \bfN_\calW \\ \bfR_\calW \sBOUT = \sBIN \bfq_1^T \\ \bfQ_2 \sBOUT \bfR$. Combining these two yields $\bfT$ by:
\begin{align}
  \BIN \mu \bfj^T \bfN_\calW \\ \bfJ \bfN_\calW \BOUT 
	     &= \BIN 1 & 0 \\ 0 & \bfQ_\calW \BOUT \BIN \mu \bfj^T \bfN_\calW \\ \bfR_\calW \BOUT
	      = \BIN 1 & 0 \\ 0 & \bfQ_\calW \BOUT \BIN \bfq_1^T \\ \bfQ_2 \BOUT \bfR \nonumber \\ 
			 &= \BIN \bfq_1^T \\ \bfQ_\calW \bfQ_2 \BOUT \bfR
\end{align}

The matrix $\sBIN \mu \bfj^T \bfN_\calW \\ \bfR_\calW \sBOUT$ is upper Hessenberg, so that its QR decomposition is computed in $O(n^2)$~\cite[Chapter~5]{golub:book:1996}.

The decomposition of $\bfJ_\calW$ can be achieved in $O(n)$ by taking advantage of its structure. To avoid going through several corner cases, we sketch informally how this is done with the help of the example in Figure~\ref{fig:QR} (more details can be found in~\cite{escande:tech:2018}). Because the sum of three non-zero elements on any row (ignoring the first) of $\bfJ$ is zero, a careful study reveals that $\bfJ_\calW$ is made of $p$ tridiagonal blocks, one for each group of consecutive inactive constraints. Blocks $j$ and $j+1$ are separated by $a_j - 1$ rows of zeros, and the last column of the first block is aligned with the first column of the second block (Figure~\ref{fig:QR}, left). We can perform QR decompositions for each blocks separately and denote by $\tilde{\bfQ}_\calW$ the product of all orthogonal matrices. All blocks, except possibly the last one, have a rank equal to their row size minus one, so that the triangular factor of the decomposition has zeros on its last line (Figure~\ref{fig:QR}, middle left). Multiplying by a permutation matrix $\bfP_\calW$, all zero rows can be moved to the bottom, and we get a quasi-tridiagonal matrix (Figure~\ref{fig:QR}, middle right). The latter can be made triangular with a last tridiagonal QR decomposition in $O(n)$ (Figure~\ref{fig:QR}, right).

\begin{figure}
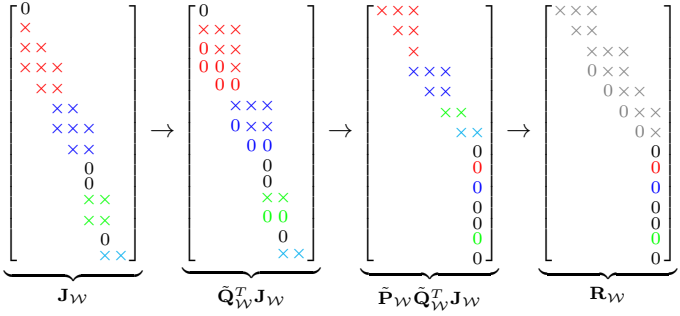

\begin{equation}
\setlength\arraycolsep{0pt}
\resizebox{!}{0.075\textheight}{$
\scriptsize
\nonumber
\underbrace{\BIN
\MATzb \\
\MATrxb  & \\
\MATrxb  & \MATrx \\
\MATrxb  & \MATrx & \MATrx \\
         & \MATrx & \MATrx \\
	     &        & \MATbx & \MATbx \\
		 &        & \MATbx & \MATbx & \MATbx \\
		 &        &        & \MATbx & \MATbx \\
		 &        &        &        & \MATz  \\
		 &        &        &        & \MATz  \\
		 &        &        &        & \MATgx & \MATgx \\
		 &        &        &        & \MATgx & \MATgx \\
		 &        &        &        &        & \MATz  \\
		 &        &        &        &        & \MATcx & \MATcx
\BOUT}_{\bfJ_\calW} 
\rightarrow
\underbrace{\BIN
\MATzb \\
\MATrxb & \MATrx & \MATrx \\
\MATrzb & \MATrx & \MATrx \\
\MATrzb & \MATrz & \MATrx \\
        & \MATrz & \MATrz \\
        &        & \MATbx & \MATbx & \MATbx \\
        &        & \MATbz & \MATbx & \MATbx \\
        &        &        & \MATbz & \MATbz \\
        &        &        &        & \MATz \\
        &        &        &        & \MATz \\
        &        &        &        & \MATgx & \MATgx \\
        &        &        &        & \MATgz & \MATgz \\
        &        &        &        &        & \MATz  \\
        &        &        &        &        & \MATcx & \MATcx
\BOUT}_{\tilde{\bfQ}_\calW^T\bfJ_\calW} 
\rightarrow
\underbrace{\BIN
\MATrxb & \MATrx & \MATrx \\
        & \MATrx & \MATrx \\
        &        & \MATrx \\
        &        & \MATbx & \MATbx & \MATbx \\
        &        &        & \MATbx & \MATbx \\
        &        &        &        & \MATgx & \MATgx \\
        &        &        &        &        & \MATcx & \MATcx \\
        &        &        &        &        &        & \MATz  \\
        &        &        &        &        &        & \MATrz \\
        &        &        &        &        &        & \MATbz \\
        &        &        &        &        &        & \MATz  \\
        &        &        &        &        &        & \MATz  \\
        &        &        &        &        &        & \MATgz \\
        &        &        &        &        &        & \MATz
\BOUT}_{\tilde{\bfP}_\calW\tilde{\bfQ}_\calW^T\bfJ_\calW}
\rightarrow
\underbrace{\BIN
\MATxxb & \MATxx & \MATxx \\
        & \MATxx & \MATxx \\
        &        & \MATxx & \MATxx & \MATxx \\
        &        & \MATxz & \MATxx & \MATxx \\
        &        &        & \MATxz & \MATxx & \MATxx\\
        &        &        &        & \MATxz & \MATxx & \MATxx\\
        &        &        &        &        & \MATxz & \MATxx \\
        &        &        &        &        &        & \MATz  \\
        &        &        &        &        &        & \MATrz \\
        &        &        &        &        &        & \MATbz \\
        &        &        &        &        &        & \MATz  \\
        &        &        &        &        &        & \MATz  \\
        &        &        &        &        &        & \MATgz \\
        &        &        &        &        &        & \MATz
\BOUT}_{\bfR_\calW}
$}
\end{equation}
\caption{\textbf{QR decomposition for $n=15$ and $\calW = \left\{1, 2, 6, 9, 10, 11, 13, 14\right\}$.} Cross symbols $\times$ stand for non-zero elements. Left: block structure of $\bfJ_\calW$, with one color per block. Middle-left: performing QR decomposition for each block. Middle-right: permuting all zero rows to the bottom. Right: completing the QR decomposition.}
\label{fig:QR}
\end{figure}

\subsection{Search for a feasible initial point}
\label{app:initial-point}

Denoting by $\bfl_Z$ and $\bfu_Z$ the bounds corresponding to $\bfC_Z$, the set $\calZ \defeq \left\{\bfphi \in \bbR^n, \bfl_Z \leq \bfC_Z \bfphi \leq \bfu_Z\right\}$ is a zonotope equal to 
\begin{equation}
    \calZ = \bfL_Z \bfl_Z + \bfL \mbox{diag}(\bfu_Z - \bfl_Z) \left[0,1\right]^n
\end{equation}
where $\bfL_Z \defeq \bfC_Z^{-1}$ is the $n \times n$ lower triangular matrix with all coefficients equal to $1$. 
Feasible points for the whole problem are those in $\calZ$ such that $\omegaimin^2 \leq \varphi_n \leq \omegaimax^2$.

Consider the point:
\begin{equation}
    \bfphi_\isubscript(a) \defeq \bfL \bfl_Z + \sum a (u_i-l_i) \bfL_i
\end{equation}
where $\bfL_i$ it the $\th{i}$ column of $\bfL$. This point is in $\calZ$ for any value $a \in \left[0,1\right]$. 
Its last component $\varphi_{\isubscript,n}(a)$ is an increasing linear function $\varphi_{\isubscript,n}(a) = s_l + a s_d$ with $s_l = \sum_j l_j$ and $s_d = \sum_j (u_j-l_j) \geq 0$.
Let us denote by $a^-$ and $a^+$ the two values such that $\varphi_{\isubscript,n}(a^-) = \omegaimin^2$ and $\varphi_{\isubscript,n}(a^+) = \omegaimax^2$. The linear constraints of the capture problem are then feasible \emph{if and only if} $\left[a^-, a^+\right] \cap \left[0, 1\right] \neq \emptyset$. In this case, any $a$ in this intersection yields a feasible point $\bfphi_\isubscript(a)$ for the problem, for instance the middle value $a_m \defeq \frac12 (\max(a^-,0) + \min(a^+,1))$. We can therefore initialize our SQP with:
\begin{align}
    \bfphi_0 & = \bfphi_\isubscript(a_m) & \bflambda_0 & = 0
\end{align}
which is guaranteed to be a feasible point.

\section{External optimization details}

\subsection{Feasibility of outer-optimization problems}
\label{app:feasible-intervals}

There are three ways a bad choice of $\alpha$ can yield an unfeasible capture problem~\eqref{eq:optim-full}:
\begin{enumerate}[(\it a)]
\item The bounds of the inequality~\eqref{eq:omega-i-3d} are such that $\omegaimin > \omegaimax$. Recall that these bounds represent feasibility of the CoP $\bfri$, which involves $\alpha$ by Equation~\eqref{eq:reduc-bxy}.
\item The intersection between the nonlinear equality constraint~\eqref{eq:conv-cons-3d} and the polytope~\eqref{eq:omega-i-3d}--\eqref{eq:optim-full-ineq} is empty. The influence of $\alpha$ on this comes from Equation~\eqref{eq:reduc-bz}.
\item The right cylinder given by the linear constraint~\eqref{eq:omega-i-3d} does not interesect the zonotope~\eqref{eq:optim-full-ineq}. The role of $\alpha$ in this comes once again from its influence on $\bfri$ by Equation~\eqref{eq:reduc-bxy}.
\end{enumerate}
Case (\emph{c}) can be caught efficiently before solving the capture problem (Appendix~\ref{app:initial-point}). While anticipating (\emph{b}) is still an open question, case (\emph{a}) can be avoided altogether thanks to a more careful treatment of CoP inequality constraints.

Recall from Equation~\eqref{eq:ineq-omegai} that:
\begin{equation}
    \left[\alpha \bfH \bfrf^{xy} + (1 - \alpha) \bfp - \bfH \bfci^{xy} \right] \omegai \geq \bfH \bfcdi^{xy}
\end{equation}
Let us rewrite this inequality as:
\begin{equation}
    (\bfu - \alpha \bfv) \omegai \geq \bfw
\end{equation}
To avoid corner cases, let us extend the three vectors $\bfu$, $\bfv$ and $\bfw$ with two additional lines:
\begin{itemize}
    \item \emph{Line 1}: $u = 1$, $v = 0$ and $w = \omegaimin$
    \item \emph{Line 2}: $u = -1$, $v = 0$ and $w = -\omegaimax$
\end{itemize}
Next, note that the two sets $\calA_\textnormal{min}(\alpha) \defeq \{ i, u_i - \alpha v_i \geq 0 \}$ and $\calA_\textnormal{max}(\alpha) \defeq \{ i, u_i - \alpha v_i \leq 0 \}$ are such that:
\begin{align}
    \omegaimin(\alpha) & = \max\left\{\frac{w_i}{u_i - \alpha v_i}, i \in \calA_\textnormal{min}(\alpha)\right\} \\
    \omegaimax(\alpha) & = \min\left\{\frac{w_i}{u_i - \alpha v_i}, i \in \calA_\textnormal{max}(\alpha)\right\}
\end{align}
Using a technique reminiscent of Fourier-Motzkin elimination, a necessary and sufficient condition for $\omegaimin \leq \omegaimax$ is then that, for all pairs $(i, j) \in \calA_\textnormal{min}(\alpha) \times \calA_\textnormal{max}(\alpha)$,
\begin{equation}
    \label{eq:alpha-smart}
    u_i w_j - u_j w_i \leq \alpha (v_i w_j - v_j w_i)
\end{equation}
These inequalities are of the form $\widetilde{\bfu} \alpha \geq \widetilde{\bfv}$ and can therefore be reduced, similarly to Equations~\eqref{eq:calc-omegaimin}--\eqref{eq:calc-omegaimax}, into a single interval $[\alphamin, \alphamax]$. In this interval, it is guaranteed that $\omegaimin \leq \omegaimax$ and failure case (\textit{a}) is avoided.

The subtlety to notice here is that the index sets $\calA_\square(\alpha)$ change when $\alpha$ crosses the roots $u_i / v_i$. (Note that there are few such roots in practice, \emph{e.g.} at most six with rectangular foot soles.) We take this phenomenon into account in the overall Algorithm~\ref{alg:alpha-feas}.

\begin{algorithm}[t]
	\caption{Computation of $\alpha$ feasibility intervals}
    \label{alg:alpha-feas}
    \begin{algorithmic}
        \REQUIRE vectors $\bfu, \bfv$ and $\bfw$
        \ENSURE set $\calI$ of feasible intervals $[\alphamin, \alphamax]$
        \STATE $\calI \leftarrow \emptyset$
        \STATE $\calR \leftarrow \{ r_j = u_j / v_j | r_j \in (0, 1) \}$
        \FOR{$(r_{j}, r_{j+1})$ consecutive roots in \textsc{sort}($\calR$)}
            \STATE $(\alphamin, \alphamax, \alpha) \leftarrow (r_{2j}, r_{2j+1}, \frac12 (r_{2j} + r_{2j + 1}))$
            \STATE {Compute} index sets $\calA_\textnormal{min}(\alpha)$ and $\calA_\textnormal{max}(\alpha)$
            \STATE {Reduce} $[\alphamin, \alphamax]$ using~\eqref{eq:alpha-smart} with $\calA_\textnormal{min}(\alpha)$, $\calA_\textnormal{max}(\alpha)$
            \STATE $\calI \leftarrow \calI \cup \{[\alphamin, \alphamax]\}$
        \ENDFOR
        \STATE \textbf{return} $\calI$
	\end{algorithmic}
\end{algorithm}

\end{document}